\newtheorem{theorem}{Theorem}[section]
\newtheorem{lemma}[theorem]{Lemma}
\newtheorem{example}{Example}[section]
\newtheorem{definition}{Definition}
\def\twofig{0.48\textwidth}
\def\twofigminus{0.48\textwidth}
\def\twofigMINUS{0.355\textwidth}
\def\C{\mathbb{C}}
\def\E{\mathbb{E}}
\def\P{\mathbb{P}}
\def\eps{\epsilon}
\def\L{\Lambda}
\def\la{\lambda}
\def\el{\mathcal{L}}
\def\twofig{0.48\textwidth}
\def\KK{\mathbf{K}}
\def\HH{\mathbf{H}}
\def\II{\mathbf{I}}
\def\BB{\mathbf{B}}
\def\LL{\mathbf{L}}
\def\DD{\mathbf{\Delta}}
\def\D{\mathbf{D}}
\def\PP{\mathbf{P}}
\def\QQ{\mathbf{Q}}
\def\UU{\mathbf{U}}
\def\VV{\mathbf{V}}
\def\SSigma{\mathbf{\Sigma}}
\def\d{\mathrm{d}}
\def\det{\mathrm{Det}}
\renewcommand{\l}[0]{\left }
\renewcommand{\r}[0]{\right}
\renewcommand*{\@cite@ofmt}{\hbox}
\title{Learning from DPPs via Sampling: \\ Beyond HKPV and symmetry\thanks{Authors listed in alphabetical order}}
\author{%
R\'emi Bardenet\\
Universit\'e de Lille, CNRS, Centrale Lille\\
UMR 9189 - CRIStAL, Villeneuve d’Ascq, France \\
\texttt{remi.bardenet@gmail.com} \\
\And
Subhroshekhar Ghosh \\
National University of Singapore , Dept of Math\\
10 Lower Kent Ridge Road, Singapore 119076 \\
\texttt{subhrowork@gmail.com} \\
}
\begin{document}

\maketitle

\begin{abstract}
Determinantal point processes (DPPs) have become a significant tool for recommendation systems, feature selection, or summary extraction, harnessing the intrinsic ability of these probabilistic models to facilitate sample diversity.
The ability to sample from DPPs is paramount to the empirical investigation of these models. Most exact samplers are variants of a spectral meta-algorithm due to Hough, Krishnapur, Peres and Vir\'ag (henceforth HKPV, \cite{HKPV06}), which is in general time and resource intensive. For DPPs with symmetric kernels, scalable HKPV samplers have been proposed that either first downsample the ground set of items, or force the kernel to be low-rank, using e.g. Nyström-type decompositions.

In the present work, we contribute a radically different approach than HKPV.
Exploiting the fact that many statistical and learning objectives can be effectively accomplished by only sampling certain key observables of a DPP (so-called \emph{linear statistics}), we invoke an expression for the Laplace transform of such an observable as a single determinant, which holds in complete generality.
Combining traditional low-rank approximation techniques with Laplace inversion algorithms from numerical analysis, we show how to directly approximate the distribution function of a linear statistic of a DPP. This distribution function can then be used in hypothesis testing or to actually sample the linear statistic, as per requirement. Our approach is scalable and applies to very general DPPs, beyond traditional symmetric kernels.

\end{abstract}

\section{Introduction}
Determinantal point processes (\textit{abbrv.} DPPs) have recently emerged as a powerful modelling paradigm in machine learning. DPPs were first formalized by Macchi \cite{Mac}, to model fermion beams in quantum optics. Subsequently, such a determinantal structure was discovered in many fundamental settings in statistical physics and probability, including, in particular, important models of random matrix theory and associated particle systems. Viewed as a model for generating random subsets of items, DPPs can in particular encode repulsive interaction between these items through a so-called \emph{kernel} matrix. Moreover, inference and sampling can be done in polynomial time \cite{KuTa12}. When the task at hand can be abstracted as selecting a small and diverse set of items of a large universal set, DPPs thus appear as a natural tool. In machine learning, DPPs have been used in pose estimation in videos \cite{KuTa10}, recommendation systems \cite{GBDK19}, text summarization \cite{KuTa12}, coreset construction \cite{TrBaAm19}, feature selection \cite{BeBaCh18Sub}, etc. In all these applications, being able to sample from the learned DPP is essential.

Except for a few specialised kernels (e.g., uniform spanning trees \cite{PrWi98JoA}), the default exact sampler is a spectral meta-algorithm due to Hough, Krishnapur, Peres and Virag (\textit{abbrv.} HKPV, \cite{HKPV06}).
%  , and still most widely used, exact sampler
%The This has led to a vast body of works on sampling methodologies, including but not limited to the very substantial body of work on Markov Chain Monte Carlo (MCMC) algorithms.
Sampling from DPPs presents its own challenges, pertaining to the complicated algebraic structure inherent in the model, which limits its tractability as a probabilistic object. In particular, the \textit{ambient dimension} as well as the \textit{inherent dimension} of the model (which pertain to the sizes of the universal set and the randomly selected subset, respectively) are usually very large in ML applications. This renders spectral methods such as HKPV, that involve cubic cost manipulations of the DPP kernel, expensive both in terms of time and resources. This has led to a vast body of work on scalable DPP sampling, among which scalable approaches to HKPV through either low-rank approximations of the kernel \cite{KuTa11, GiKuTa12,AKFT13,AFAT13}, or by carefully downsampling the universal set \cite{Der19,DeCaVa19}.
% Sampling algorithms for  DPPs are all variants of a meta-algorithm due to Hough, Krishnapur, Peres and Virag (\textit{abbrv.} HKPV, \cite{HKPV06}), except for the cases of a few specialised kernels that are of limited applicability. Roughly speaking, this meta-algorithm proceeds via an adaptive spectral resolution of the kernel operator (matrix) of a DPP as the sample points are generated sequentially. As such, it suffers from the \textit{curse of dimensionality}, as discussed above. In general, there are no known methods to circumvent the use of these ideas in the sampling architecture for DPPs, which presents serious practical bottlenecks in the applications of the DPP modelling paradigm.

We first note that the practitioner may not be really interested in generating samples of the full random subset $X$ from the DPP as such, but only in obtaining samples of certain important \emph{linear statistics} $\Lambda(\Psi):=\sum_{x\in X} \Psi(x)$, for some complex-valued function $\Psi$ over the universal set. A first example arises when a DPP is used to subsample a large dataset $\{x_1,\dots,x_N\}\subset\mathcal{X}$ into a \textit{coreset} \cite[Section 2.2]{TrBaAm19} for a given loss function $L:\mathcal{X}\times\Theta \rightarrow \mathbb{R}$. This means that we look for a small subset $X$ of $\{x_1,\dots,x_N\}$ and a set of weights $\omega_x, x\in X,$ such that the weighted average of $L(\cdot,\theta)$ over $X$ is close in relative error to the average loss over the whole dataset, either uniformly in $\theta$ or for some fixed value of $\theta$. Once the DPP is fixed, one is thus interested in the distribution of the linear statistics $\Lambda(\omega_{\cdot} L(\cdot,\theta)) = \sum_{x\in X}\omega_x L(x,\theta)$, one statistic per value of $\theta$ considered. A related case of interest is the use of DPPs to select mini-batches in stochastic gradient algorithms \cite{ZhKjMa17}: there again, one is not really interested in the DPP itself, but in the realization of the noisy gradient, another example of linear statistic. Another use case for sampling a linear statistic, and \emph{a fortiori} knowing the distribution of that statistic, is to explore a DPP model. In text summarization \cite{KuTa12} or recommendation systems \cite{GBDK19}, once the kernel is learned in some nonparametric way, one may understand the model by looking at the distribution of linear statistics such as, respectively, the number of characters in a DPP summary, or the the total price of a DPP basket. Finally, in a hypothesis testing setup, it is usually very difficult to compare distributions on subsets of a very large universal set, and it is natural that  effective tests of hypothesis be based on comparing the values of a summary statistic against a threshold. Further, the determination of such thresholds involves estimating only some particular quantiles of the distribution of the relevant summary statistic. In all these, sampling from the corresponding DPP is only a means to obtain a sample of a statistic, which often turns out to be a linear statistic.

In this paper, we investigate a way to directly approximate the distribution function of a given linear statistic of a DPP, and approximately sampling the linear statistic if desired, without ever sampling the underlying DPP.
After introducing DPPs and HKPV sampling in Section~\ref{s:determinantal} and Laplace transforms in Section~\ref{s:laplace}, we contribute in Theorem~\ref{thm:Laplace} an expression for the Laplace transform of a linear statistic of a DPP, in terms of finite \textit{Fredholm determinants}. Our result extends the classical reference \cite{ST} by removing all assumptions on $\KK$ for finite DPPs; in particular, it is the first to encompass \emph{attractive-repulsive} non-symmetric DPPs \cite{GBDK19}. %Such formulae hold in complete generality, and even beyond the realm of a strictly determinantal structure governing the distribution of the random subset.
In Section~\ref{s:algorithm}, drawing on an extensive repertoire of numerical methods --~to compute the determinants on one hand, and to invert the Laplace transform a on the other~-- we put forward a methodology to approximate the cumulative distribution function (CDF) of nonnegative linear statistics of DPPs. Sampling is then straightforward using the inverse CDF approach \cite{RoCa04}. In Section~\ref{s:experiments}, we numerically investigate our approach, and we demonstrate that it outperforms the natural alternative of first generating a random subset from the DPP and then computing its corresponding linear statistic.
%Our technique is scalable via low-rank approximations such as Nystr\"{o}m's \cite{WiSe01}, and applies to a wide class of models that include the traditional symmetric kernel DPPs as a special case, e.g. the $\a$-determinantal models of \cite{ST}, non-symmetric kernels, and so forth.
Finally, in Section~\ref{s:discussion}, we discuss possible extensions.

\section{Determinantal point processes and their sampling}
\label{s:determinantal}
A DPP is a probabilistic model for selecting a random subset $X$ of a bigger universal (or ground) set $\Xi=[N]:=\{1,\dots,N\}$, parametrized by an $N\times N$ matrix $\KK$.

\begin{definition}[DPP]
  Let $\KK$ be an $N\times N$ complex matrix. We say that $X\sim\text{\emph{DPP}}(\KK)$ if
  \begin{equation} \label{eq:DPP-def}
  \P(A \subseteq X) = \det[\KK_A], \quad \forall A\subseteq \Xi,
  \end{equation}
where $\KK_A$ is the submatrix of $\KK$ corresponding to the rows and columns of $\KK$ indexed by $A$.
\end{definition}
Conditions must be put on the kernel matrix $\KK$ to ensure that such a probability exists. For instance, when $\KK$ is Hermitian with eigenvalues in the interval $[0,1]$, existence follows from a classical theorem due to Macchi and Soshnikov \cite{Mac,Sos2000}.
%(in other words, the matrix $K$ is a nonnegative contraction).
Alternately, if $\KK$ has all its eigenvalues in the set $[0,1)$, existence is equivalent to $\mathbf{L}=(\II-\KK)^{-1}\KK$ having nonnegative principal minors \cite{Mac, Boro, Bru18}, where $\II$ is the identity matrix on $\Xi$. In that case, one actually has a closed-form expression for the likelihood
\begin{equation} \label{eq:DPP-L}
\P(X=A)=\frac{\det[\mathbf{L}_A]}{\det[\II+\mathbf{L}]}, \quad \forall A\subseteq \Xi.
\end{equation}
\begin{definition}[$L$-ensemble]
  \label{d:L-ensembles}
  An $L$-ensemble with kernel $\LL$ is a DPP satisfying \eqref{eq:DPP-L}.
\end{definition}

\subsection{Sampling from DPPs: the HKPV meta-algorithm}
\label{ss:HKPV}
Whether DPPs are used as to extract summaries \cite{KuTa12}, select features \cite{BeBaCh18Sub}, or recommend baskets \cite{GBDK19}, sampling algorithms are needed. Sampling DPPs has indeed attracted considerable attention, from the original HKPV algorithm \cite{HKPV06} and its variants, see Section~\ref{ss:scalingUp}, to randomized numerical algebra \cite{Pou20} and related coupling constructions \cite{DeLaGa20}, or MCMC approximate samplers \cite{Kan13,LiJeSr15,ReKa15,AnGhRe16,LiJeSr16,GaBaVa17,HeSa19}. While a handful of exotic DPPs are amenable to computationally cheap adhoc approaches (e.g., uniform spanning trees \cite{PrWi98JoA}), most exact samplers are related to the original HKPV \cite{HKPV06}, investigated for finite $\Xi$ in \cite{KuTa12,Gil14}. An instance of the HKPV algorithm is given in Figure~\ref{f:HKPV}. %Roughly speaking, HKPV builds up the random subset $X$ one element at a time, spectrally adapting the kernel $\KK$ at each step, depending on the element selected in that step.
%More technically, the Hilbert space geometry of non-negative definite matrices allows to interpret conditional probabilities in the chain rule for \emph{projection} DPPs as squared norms of orthogonal projections. Projection DPPs are DPPs such that their kernel $\KK$ is a projection, and all DPPs with Hermitian kernels can be written as mixtures of projection DPPs: this is the main thrust in the HKPV algorithm; see \cite{Gil14,Pou20} for details.
In particular, a careful implementation of HKPV \cite[Section 2.4.4]{Gil14} has expected cost $\mathcal{O}(N^\omega + N\tau^2)$ time, where $\tau=\text{Tr}(\KK)=\mathbb{E}\vert X\vert$ acts as a sort of intrinsic dimension, to which HKPV effectively reduces the original dimension $N=\vert \Xi\vert$. Still, the bottleneck is usually the $\mathcal{O}(N^\omega)$ spectral decomposition of $\KK$.

%
% In terms of complexity, there are two natural concepts of dimension : one is the ambient dimension $N=|\Xi|$, which is usually large, and the other is the relatively intrinsic (random) dimension $k=|X|$, which is typically much smaller than $|\Xi|$. The intrinsic dimension is connected to some sort of \textit{effective rank}  of $\KK$ -- in the setting where $\KK$ is a projection it is the actual rank; whereas the ambient dimension is simply the size of the matrix $\KK$.
% The HKPV algorithm effectively reduces the dimension of the sampling problem from the ambient dimension to the intrinsic dimension: the running time in Figure~\ref{f:HKPV} is ; see e.g. the discussion in \cite[Section 2.4.4]{Gil14} and \cite{Pou20}.

\begin{figure}
  \begin{algorithm}{$\Algo{\cal{HKPV}}\big(\KK)$}
 \Aitem Perform spectral decomposition $\KK=\sum_{i=1}^{N} \la_i \phi_i\phi_i^{T}$.
 \Aitem Draw $N$ independent Bernoulli $B_i\sim \text{Ber}(\la_i)$. Set $k\setto \sum_{i=1}^N B_i$.
 \Aitem Initialise the kernel $\HH\setto\sum_{i \in I} B_i\phi_i \phi_i^{T}$ and the set $S\setto\emptyset$.
  \Aitem \For $i = 1,\dots,k$,
  \Aitem \mt Sample $\eta$ from $\Xi \setminus S$ with $\P(\eta = j) \propto \HH_{jj}$.
  \Aitem \mt Update $S \setto S \cup \{\eta\}$.
 \Aitem \mt Update $\HH \setto \HH - (\HH_{\eta\eta})^{-1} [\HH_{\Xi\eta} \HH_{\Xi\eta}^T] $.
  \Aitem \Return $S$.
  \end{algorithm}
\caption{
\label{f:HKPV}
The HKPV algorithm. $\KK_{\Xi\eta}$ stands for the $\eta$th column of $\KK$.
}
\end{figure}

In its dependence on spectral geometry, HKPV and its variants are primarily geared towards symmetric (or at least, Hermitian) kernels. At a high level, it can be viewed as a randomly pivoted Cholesky factorization \cite{Pou20}. There has been recent progress in extending this approach to $LU$ decompositions, yielding a $\mathcal{O}(N^3)$ sampler capable of addressing non-symmetric kernels \cite[Algorithms 1 and 4]{Pou20}, henceforth called the \emph{LU-based sampler}. Both the \emph{LU}-based sampler and HKPV become prohibitively expensive as $N$ grows, even disregarding storage constraints.

% [\rb{Maybe we can describe HKPV more formally here, e.g. by writing the chain rule as a product of base times height formulas. I'll do that later.}
% \sg{Done. The discussion is somewhat detailed, including a formal presentation of the meta-algorithm in Fig. \ref{f:HKPV}. The discussion, of course, may be shortened later.}]
%
% [\rb{ Also, we should mention coupling-style arguments and the chain rule on sets of Poulson. It's vaguely related to HKPV, but I would not count it as a variant.} \sg{Done.  The extent of the details gone into may be adjusted later.}]
%
% [\rb{Put remark here on HKPV and symmetry: are we sure that HKPV is fundamentally linked to symmetry? I could sample a biorthogonal polynomial ensemble by HKPV, couldn't I ? To me, the limitation is more the need of a spectral decomposition with good orthogonality properties than symmetry.}
% \sg{This is a very interesting point. I don't think this has been considered in the research literature. In the absence of any literature to point to, this calls for a detailed discussion. This can perhaps be reserved for a longer version of the paper meant for a journal submission, possibly with pointers to it in the conference version.}]

\subsection{Scaling up HKPV to large universal sets}
\label{ss:scalingUp}

A lot of work has gone into bypassing the cost of the spectral decomposition of $\KK$ in HKPV when $\KK$ is real symmetric, either through exploiting low-rank kernels \cite{KuTa11, GiKuTa12,AKFT13,AFAT13}, or by carefully downsampling the universal set \cite{Der19,DeCaVa19}. We focus here on low-rank kernels, as it is relevant in the context of our proposed method.
%is arguably a generalization thereof.
When the kernel is real symmetric and
\begin{equation}
  \label{e:nystrom}
  \KK=\BB^T\BB, \quad\text{where $\BB$ is $D\times N$ and $D\ll N$,}
\end{equation}
Kulesza and Taskar \cite{KuTa12} indeed show how the computational burden in HKPV can be kept down to the eigendecomposition of the $D\times D$ matrix $\BB\BB^T$. They actually start from a decomposition of $\LL=(\II-\KK)^{-1}\KK$, but the extension to $\KK$ is straightforward. Neglecting for now the cost of obtaining the decomposition \eqref{e:nystrom}, this yields a $\mathcal{O}(ND^2\tau)$ algorithm \cite[Section 2.4.4]{Gil14}, known as \emph{dual} HKPV.

 A popular decomposition like \eqref{e:nystrom} for DPPs is Nyström's \cite{WiSe01,AFAT13}. It consists in selecting a subset $Z\subset \Xi$ of cardinality $D$, and setting $\BB=\mathbf{S} \KK_{Z\Xi}$, with $\KK_{Z\Xi}$ the $D\times N$ submatrix of $\KK$ corresponding to the rows indexed by $Z$ and all columns, and $\mathbf S$ the square root of the pseudo-inverse of $\KK_Z$. Dual HKPV with Nyström decomposition thus remains a $\mathcal{O}(ND^2\tau)$ algorithm \cite{AFAT13}. In practice, the choice of $D$ and $Z$ for kernel machines is the topic of a rich literature; see \cite{GiMa16, Cal17} and pointers therein. One example approach with strong theoretical backing in kernel regression is to set $D$ sufficiently large compared to the trace of $\KK$ and sample $Z$ without replacement from a multinomial distribution, with weights given by so-called \emph{approximate ridge leverage scores}, computable in time linear in $N$ \cite{AlMa15}.

 One important limitation of scalable approaches to HKPV is that all work so far has focused on symmetric kernels $\KK$ with eigenvalues in $[0,1)$, usually by parametrizing a positive semidefinite symmetric $\LL$, which implicitly defines $\KK=(\II+\LL)^{-1}\LL$. But investigation on learning nonsymmetric kernels has started, since they offer significantly more modelling power \cite{Bru18,GBDK19}. In recommendation systems, for instance, allowing the signs of $\KK_{ij}$ and $\KK_{ji}$ to differ favours the co-occurrence of items $i$ and $j$ in DPP samples. Furthermore, many DPPs used as subsampling algorithms \cite{BeBaCh18Sub,BeBaCh19} have projection kernels, i.e. $\KK$ has eigenvalues in $\{0,1\}$, thus not fitting the requirement that the spectrum of $\KK$ lie in $[0,1)$. In this paper, we investigate a new scalable way to sample certain observables of DPPs called linear statistics, where neither symmetry nor the eigenvalues of $\KK$ play a role.

\section{The Laplace transform and sampling}
\label{s:laplace}
We refer to \cite[Chapter 5]{Kal06} and \cite{Doe74} for general references on Laplace transforms in probability and analysis, respectively. The Laplace transform of a non-negative random variable $Y$ is the function given, for $s \ge 0$, by the formula $\el_Y(s)=\E[e^{-sY}]$. The restriction of non-negativity on $Y$ and $s$ are for convergence purposes in the most general setting. If a real-valued random variable $Y$ has sufficiently light tails, then
%(super-exponential decay of tail probabilities, e.g. in the case of Gaussian random variables),
$\el_Y(s)$ is well-defined for all complex numbers $s$. The fact that the domain of the Laplace transform can be extended to complex numbers will, in fact, be of crucial importance for our algorithmic approach. Finally, under very general conditions, the Laplace transform of a random variable uniquely identifies its distribution. The following will come in handy shortly.
\begin{example}
  \label{e:bernoulli}
  %For $Y$ a Bernoulli random variable with parameter $p$, the  Laplace transform is $\el_Y(s)=(1-p+e^{-s}p)=(1-(1-e^{-s})p)$.
  Let $Z_i \sim \mathrm{Ber}(p_i)$ be independent. Then $\el_{Z_1+\dots+Z_k}(s)=\prod_{i=1}^k (1-(1-e^{-s})p_i))$.
\end{example}

\subsection{The Laplace transform of linear statistics of a DPP}
\label{ss:laplace_transform_of_linear_statistic}
We provide here a closed form expression for the Laplace transform of a linear statistic of a DPP. Similar expressions for DPPs on more general sets are known, involving \textit{Fredholm determinants} (see, e.g., \cite{ST}). %Fredholm determinants are of independent interest in physics and classical applied mathematics, and arise due to the necessity of introducing determinantal objects for infinite dimensional operators acting on the continuum.
% there is an extensive literature on how to efficiently deal with the computational and numerical aspects of such objects. However, Fredholm determinants arise due to the necessity of introducing determinantal objects for infinite dimensional operators acting on the continuum.
In the setting of most crucial interest in ML, the universal set $\Xi$ is finite, and we contribute here a much simpler result on the Laplace transform of linear statistics of finite DPPs. This has two advantages over the classical reference \cite{ST}.
% who required various restrictive assumptions on the kernel for their result.
First, all relevant quantities are expressed here in terms of usual determinants, which lets us use scalability techniques from the kernel machine literature. Second, our result is applicable to a much more general class of kernels and linear statistics than \cite{ST}, including the nonsymmetric kernels of \cite{Bru18,GBDK19}.
We state this as:

\begin{theorem} \label{thm:Laplace}
Let $X\sim\text{DPP}(\KK)$. We only assume that the probability measure on the subsets of $\Xi$ that satisfies \eqref{eq:DPP-def} is well-defined; in particular no further assumptions on $\KK$ are made vis-a-vis symmetry or otherwise. Let also $\Psi:\Xi \mapsto \C$. Then, for any $s \in \C$, the Laplace transform of the linear statistic $\L(\Psi):=\sum_{x\in X} \Psi(x)$ satisfies
\begin{equation} \label{eq:Laplace}
\el_{\L(\Psi)}(s) = \det[\II-\DD_{\Psi}\KK], \quad \text{where } \DD_\Psi=\mathrm{Diag}[(1-\exp(-s\Psi(i)))_{i \in \Xi}].
\end{equation}

\end{theorem}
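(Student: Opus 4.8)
The plan is to compute the Laplace transform directly from the defining inclusion probabilities $\P(A\subseteq X)=\det[\KK_A]$, so that the whole statement reduces to one algebraic identity for determinants. First I would write, with the shorthand $z_i := e^{-s\Psi(i)}$,
\[
\el_{\L(\Psi)}(s) = \E\Big[e^{-s\sum_{x\in X}\Psi(x)}\Big] = \E\Big[\prod_{i\in X} z_i\Big],
\]
so that the task becomes evaluating the expected product of the $z_i$ over the random subset $X$.

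The key step is a subset expansion of this product. Since each $\mathbf{1}[i\in X]\in\{0,1\}$, one has $z_i^{\mathbf{1}[i\in X]} = 1 + (z_i-1)\mathbf{1}[i\in X]$, hence
\[
\prod_{i\in X} z_i = \prod_{i\in\Xi}\big(1+(z_i-1)\mathbf{1}[i\in X]\big) = \sum_{A\subseteq\Xi}\Big(\prod_{i\in A}(z_i-1)\Big)\,\mathbf{1}[A\subseteq X].
\]
Taking expectations and invoking \eqref{eq:DPP-def} termwise gives
\[
\el_{\L(\Psi)}(s) = \sum_{A\subseteq\Xi}\Big(\prod_{i\in A}(z_i-1)\Big)\,\det[\KK_A].
\]
This is the only place the DPP structure enters, and it enters solely through the inclusion probabilities; no hypothesis on $\KK$ (symmetry, spectrum, or otherwise) is needed, which is exactly what the theorem promises.

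It then remains to recognise the right-hand side as a single determinant. Writing $\mathbf{W}:=\mathrm{Diag}[(z_i-1)_{i\in\Xi}]$, so that $\mathbf{W}=-\DD_\Psi$, I would apply the standard principal-minor expansion $\det[\II+\mathbf{M}]=\sum_{A\subseteq\Xi}\det[\mathbf{M}_A]$ (the principal minors being, up to sign, the coefficients of the characteristic polynomial) to $\mathbf{M}=\mathbf{W}\KK$. Because $\mathbf{W}$ is diagonal, the principal submatrix factors as $(\mathbf{W}\KK)_A=\mathbf{W}_A\KK_A$, whence $\det[(\mathbf{W}\KK)_A]=\big(\prod_{i\in A}(z_i-1)\big)\det[\KK_A]$. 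Matching terms identifies the sum above with $\det[\II+\mathbf{W}\KK]=\det[\II-\DD_\Psi\KK]$, which is \eqref{eq:Laplace}.

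I do not expect a genuine analytic obstacle here, and that is precisely the point: because $\Xi$ is finite, every sum above is finite, so the whole argument is a purely algebraic identity valid verbatim for each $s\in\C$ (each $z_i=e^{-s\Psi(i)}$ being simply a complex scalar), with none of the tail or convergence conditions required in the Fredholm-determinant setting of \cite{ST}. The one step deserving care is the bookkeeping in the principal-minor identity and its factorisation through the diagonal matrix $\mathbf{W}$; everything else is linearity of expectation applied to a finite sum.
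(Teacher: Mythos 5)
Your proof is correct, and it takes a genuinely different and more direct route than the paper. The paper's proof first reduces to $L$-ensembles: it invokes Lemma~\ref{lem:Laplace} (proved via an independent-thinning construction giving kernels $\KK_\eps=(1+\eps)^{-1}\KK$ for which $\II-\KK_\eps$ is invertible), computes $\el_{\L(\Psi)}$ from the likelihood \eqref{eq:DPP-L} as $\det[\II+\D_\Psi\LL]/\det[\II+\LL]$ with $\D_\Psi=\mathrm{Diag}[(e^{-s\Psi(i)})_i]$, massages this into $\det[\II-\DD_\Psi\KK]$ using $\det[\II+\mathbf{A}\mathbf{B}]=\det[\II+\mathbf{B}\mathbf{A}]$, and then passes to the limit $\eps\downarrow 0$ by continuity. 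You instead work directly with the inclusion probabilities \eqref{eq:DPP-def}: the expansion $\prod_{i\in X}z_i=\sum_{A\subseteq\Xi}\bigl(\prod_{i\in A}(z_i-1)\bigr)\mathbf{1}[A\subseteq X]$, linearity of expectation over a finite sum, and the principal-minor identity $\det[\II+\mathbf{M}]=\sum_{A}\det[\mathbf{M}_A]$ applied to $\mathbf{M}=-\DD_\Psi\KK$ (using that $(\mathbf{W}\KK)_A=\mathbf{W}_A\KK_A$ for diagonal $\mathbf{W}$). All steps check out, and your route buys a purely algebraic one-page argument that needs neither the $L$-ensemble likelihood nor the approximation lemma, while establishing the result under exactly the same minimal hypothesis (existence of the measure satisfying \eqref{eq:DPP-def}). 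What the paper's longer route buys is Lemma~\ref{lem:Laplace} itself, which the authors flag as being of independent interest; it is not needed for the theorem if one argues as you do.
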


%First, we note that for real symmetric kernels, the equivalent expression $\det [\II-\DD_{\Psi}^{1/2} \KK \DD_{\Psi}^{1/2}]$ is numerically preferable to \eqref{eq:Laplace}.
%Second,
One immediately recovers some known facts on DPPs. For instance, if $X\sim\text{DPP}(\KK)$, $A \subseteq \Xi$, and $1_A$ denotes the indicator of $A$, then a simple linear statistic is the number $N_A=\L(1_A)$ of points of $X$ that fall in $A$. Invoking Theorem \ref{thm:Laplace}, we get
$\el_{N_A}(s)= \prod_{\la_i\in \mathrm{Spec}(K_A)}(1-(1-e^{-s})\la_i)$.
We then recognize the Laplace transform of Example~\ref{e:bernoulli}, thus proving that $N_A$ is a sum of independent Bernoullis with parameters $\la_i \in \mathrm{Spec}(K_A)$. This is a non-trivial fact; see \cite{HKPV06} for a derivation using HKPV in the particular case of Hermitian kernels.

%Define and give a few examples, like Bernoulli. Use it to identify the law of the number of items in a sample from a symmetric DPP. This is a first illustration of identifying the law of a linear statistic through its Laplace transform.

The proof of Theorem \ref{thm:Laplace} is deferred to Appendix~\ref{supp:s:proof_of_theorem}. By a continuity argument, we reduce to $L$-ensembles; see Definition~\ref{d:L-ensembles}. This is encapsulated in Lemma~\ref{lem:Laplace} below, which may be of independent interest and is proved in Appendix~\ref{supp:s:proof_of_lemma}. %The rest of the proof consists in algebraic manipulations.

\begin{lemma} \label{lem:Laplace}
Let $X\sim \text{DPP}(\KK)$, in the sense that \eqref{eq:DPP-def} holds. Then there exists a sequence of DPPs $X_\eps$ on $\Xi$ with kernels $\KK_\eps$, indexed by the parameter $\eps \downarrow 0$, that are also $L$-ensembles (in the sense that there exist matrices $\LL_\eps$ such that \eqref{eq:DPP-L} holds), and $\KK_\eps \to \KK$ in the Frobenius norm.
\end{lemma}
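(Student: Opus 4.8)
The plan is to build the approximating $L$-ensembles by a single explicit operation: a multiplicative rescaling of the kernel, realized probabilistically as an independent thinning of $X$. Concretely I would set $\KK_\eps := (1-\eps)\KK$ for $\eps \in (0,1)$, and then verify the three requirements separately — that $\KK_\eps$ defines a genuine DPP, that this DPP is an $L$-ensemble, and that $\KK_\eps \to \KK$ in Frobenius norm.

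The first and conceptually central point is to see that $\KK_\eps$ defines a valid point process \emph{without any symmetry or spectral hypothesis} on $\KK$. Rather than checking the combinatorial nonnegativity of the numbers $\det[(\KK_\eps)_A]$ directly, I would \emph{construct} the process: let $X_\eps$ be obtained from $X\sim\text{DPP}(\KK)$ by retaining each of its points independently with probability $1-\eps$ (Bernoulli thinning, on an enlarged probability space carrying i.i.d.\ Bernoulli marks). Since $X$ is a bona fide random subset by hypothesis, $X_\eps$ is manifestly a well-defined point process for every $\eps$. A short computation of its inclusion probabilities, using that the thinning of the points of a fixed $A\subseteq X$ is independent of everything else, gives
\[
\P(A \subseteq X_\eps) = (1-\eps)^{|A|}\,\P(A \subseteq X) = (1-\eps)^{|A|}\det[\KK_A] = \det[(\KK_\eps)_A],
\]
so that $X_\eps\sim\text{DPP}(\KK_\eps)$ with $\KK_\eps=(1-\eps)\KK$. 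This uses only \eqref{eq:DPP-def} and sidesteps symmetry entirely; it is the key step that makes the lemma hold in full generality.

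Next I would arrange that $X_\eps$ is an $L$-ensemble, i.e.\ that it admits the likelihood representation \eqref{eq:DPP-L}. By the standard algebraic correspondence between the $\KK$- and $\LL$-parametrizations — which is an identity of rational functions in the matrix entries, established classically for symmetric kernels \cite{KuTa12} and valid for general finite matrices \cite{Boro, Bru18}, and therefore needs no symmetry — the DPP with kernel $\KK_\eps$ satisfies \eqref{eq:DPP-L} with $\LL_\eps=(\II-\KK_\eps)^{-1}\KK_\eps$ and $\det[\II+\LL_\eps]=\det[\II-\KK_\eps]^{-1}$, provided $\II-\KK_\eps$ is invertible. The eigenvalues of $\KK_\eps$ are $(1-\eps)\lambda_i$, where $\lambda_1,\dots,\lambda_N$ are the (possibly complex) eigenvalues of $\KK$; invertibility of $\II-\KK_\eps$ fails only when $(1-\eps)\lambda_i=1$, i.e.\ for the real value $\eps=1-1/\lambda_i$, which rules out at most $N$ values of $\eps$. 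I would therefore restrict to a sequence $\eps_n\downarrow 0$ avoiding this finite set. Note that an eigenvalue $\lambda_i=1$ of $\KK$ — the obstruction for projection DPPs — yields the bad value $\eps=0$ and is harmless for $\eps_n>0$, which is exactly why the rescaling cures such kernels.

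Finally, convergence is immediate, since $\|\KK_\eps-\KK\|_F=\eps\,\|\KK\|_F\to 0$ as $\eps\downarrow 0$. I expect the only genuinely subtle point to be the first one — legitimizing $\KK_\eps$ as a DPP in the absence of any structural assumption on $\KK$ — and the thinning construction is what resolves it cleanly; the spectral exclusion and the Frobenius estimate are then routine.
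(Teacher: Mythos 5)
Your proposal is correct and follows essentially the same route as the paper's own proof: an independent Bernoulli thinning of $X$ realizes the rescaled kernel (the paper uses retention probability $(1+\eps)^{-1}$ where you use $1-\eps$, a purely cosmetic difference), the passage from \eqref{eq:DPP-def} to \eqref{eq:DPP-L} is reduced to invertibility of $\II-\KK_\eps$ via the inclusion--exclusion correspondence, and the finitely many bad values of $\eps$ are excluded by a spectral counting argument. No gaps.
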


\subsection{Numerically inverting a Laplace transform}
\label{ss:numerical_inversion}
In Section~\ref{ss:laplace_transform_of_linear_statistic}, we identified the law of $Y=\vert X\cap A\vert=\L(1_A)$ by looking at the closed-form Laplace transform of $Y$.
For more sophisticated Laplace transforms, this kind of identification is not possible.
However, as long as the Laplace transform can be evaluated pointwise, one can evaluate the distribution function $F(t) = \mathbb{P}(Y\leq t)$ numerically.
Indeed, it can be derived that for $s>0$,
$\int F(t) e^{-st} \d t = s^{-1} \el_Y(s)$,
so that, for $t\in\mathbb{R}$, one can approximate $F(t)$ by inverting a Laplace transform. Numerical inversion of Laplace transforms is a classical research topic; we refer to \cite{Kuh13} for a survey.
% An early approach is the Gaver-Stehfest approximation \cite{}, which takes the form of a weighted sum of evaluations of $s\mapsto s^{-1}\el_Y(s)$, with real weights and nodes.
% The fact that only real numbers are required makes this approach popular in applications, but it is not robust to errors in evaluating $\el_Y$ \cite{}.
% For linear statistics of DPPs, $\el_Y$ in Theorem~\ref{thm:Laplace} is cubic to evaluate, in the size of the universal set $\Xi$. It will thus have to be approximated in large-scale applications, making Gaver-Stehfest impractical.
Most methods start from the so-called \emph{Bromwich} contour integral \cite[Equation (4)]{Kuh13}
\begin{equation}
  \label{e:bromwich}
  F(t) = \int_{\sigma + \mathrm{i}\mathbb{R}} s^{-1} \el_Y(s) e^{st}\d s\,,
\end{equation}
where $\sigma$ is any positive real number such that $\el_Y$ is analytic on $\text{Re}(s)\geq \sigma$.
Sophisticated choices for $\sigma$ and the discretization of \eqref{e:bromwich} have given several inversion algorithms, among which an algorithm by de Hoog, Knight, and Stokes (henceforth \textsc{deHoog}; \cite{HoKnSt82}).
\textsc{deHoog} forms a discrete sum approximating \eqref{e:bromwich} using the standard trapezoidal-rule with $E$ of nodes, but then actually builds a continued fraction expansion of the corresponding sum, and further uses acceleration techniques to provide a fast and accurate estimate of the evaluation of that expansion.
Neglecting the cost of evaluating the integrand, the resulting algorithm is polynomial in the number of evaluations $E$, which can usually be taken to be small \cite{Kuh13}; in the tens for all experiments in Section~\ref{s:experiments}.

\textsc{deHoog} has at least four advantages.
First, in the absence of a conclusive theoretical comparison, benchmarks and practice leads \cite{Kuh13} to recommend \textsc{deHoog} whenever $\el_Y$ is expensive to evaluate and $E$ needs to be small, which is our case.
Second, we have empirically found $\textsc{deHoog}$ to be robust to evaluation errors, which we will have to tolerate for large-scale examples where the kernel will be approximated.
Third, \textsc{deHoog} is available in the multi-precision arithmetic Python library \emph{mpmath} \cite{Joh18}.
Fourth, while the \emph{mpmath} implementation has a default rule of thumb to choose $\sigma$ depending on $t$, we can also keep $\sigma$ fixed for different values of $t$, as long as $\el_Y$ is analytic on $\text{Re}(s)\geq \sigma$. In that case, the nodes at which $\el_Y$ needs to be evaluated in \textsc{deHoog} do not depend on $t$. We can thus evaluate $F$ in \eqref{e:bromwich} at several values of $t$ using the same set of (costly) evaluations of $\el_Y$.

Once one has an approximate $F$, one has a convenient access to the distribution of $Y$, e.g., through its quantiles.
It is even possible to numerically solve $F(t)=U$ for $U\sim \mathcal{U}(0,1)$ to obtain an approximate sampler of $Y$ \cite{Rid09}.
On sampling with Laplace transforms, see also the rejection samplers of \cite{Dev81,Dev86} and the direct mixture-of-exponentials approximation of the PDF of $Y$ \cite{Wal17}.

%\section{Computing the CDF of linear statistics of DPPs }

\section{Our algorithm}
\label{s:algorithm}
For a DPP with kernel $\KK$ and a linear statistic $Y=\Lambda(\Psi)$ as in Theorem~\ref{thm:Laplace}, we propose to recover the CDF $F$ of $Y$ through de Hoog's inversion applied to \eqref{eq:Laplace}. The pseudocode in Figure~\ref{f:algorithm} summarizes how to evaluate $F$ at $T$ arbitrary points. Note how the loop can be parallelized, as we are perfoming $T$ independent numerical quadratures, with possibly different nodes. Additionally, the procedure does not put any constraint on $\KK$ and $\Psi$ other than defining a valid Laplace transform $\el_Y$ in \eqref{eq:Laplace}. With the inversion done, one can further compute approximate quantiles or sample $Y$; see Section~\ref{ss:numerical_inversion}.

\begin{figure}
  \begin{algorithm}{$\Algo{ApproxCDF}\big(\KK, \{t_1,\dots,t_T\}, \{\sigma_{t_1},\dots,\sigma_{t_T}\}, \{E_1,\dots,E_T\})$}
  \Aitem \For $t\in \{t_1,\dots,t_T\}$,
  \Aitem \mt Evaluate $\el_Y$ in \eqref{eq:Laplace} at the $E_t$ nodes on $\sigma_t + \mathrm{i} \mathbb{R}$ presribed by \textsc{deHoog}.\label{as:evaluation}
  \Aitem \mt \text{Apply \textsc{deHoog}'s quadrature to \eqref{e:bromwich}. Store the result in $\hat{F}_t$.}
  \Aitem
  \Return $(\hat{F}_{t})_{t\in\{t_1,\dots,t_T\}}$.
  \end{algorithm}
\caption{
\label{f:algorithm}
The pseudocode our approach. Keeping $\KK$ low-rank \eqref{e:nystrom} makes Step~\ref{as:evaluation} cost $\mathcal{O}(E_tND^2)$.}
\end{figure}

\begin{figure}
  \begin{algorithm}{$\Algo{ApproxCDFWithDiagonal}\big(\KK, \{t_1,\dots,t_T\}, \{\sigma_{t_1},\dots,\sigma_{t_T}\}, \{E_1,\dots,E_T\})$}
  \Aitem \For $t\in \{t_1,\dots,t_T\}$,
  \Aitem \mt For each quadrature node $s\in \sigma_t + \mathrm{i} \mathbb{R}$ prescribed by \textsc{deHoog},
  \Aitem \mtt Compute a low-rank approximation to $\DD_\Psi \KK$ in \eqref{eq:Laplace}.
  \Aitem \mtt Use that approximation to evaluate $\el_Y(s)$ as in \eqref{e:svdForLaplace}.
  \Aitem \mt \text{Apply \textsc{deHoog}'s quadrature to \eqref{e:bromwich}. Store the result in $\hat{F}_t$.}
  \Aitem
  \Return $(\hat{F}_{t})_{t\in\{t_1,\dots,t_T\}}$.
  \end{algorithm}
\caption{
\label{f:algorithm_variant}
The pseudocode of a variant of our approach, where we take the low-rank approximation into the loop. In practice, we use the approximate SVD of \cite{WLRT08}, which is quadratic in $N$.}
\end{figure}

\subsection{Comparison with the direct approach}
Say one is interested in the CDF of $Y=\Lambda(\Psi)$ at $T$ points $\{t_1,\dots,t_T\}$. Assume that HKPV can be applied, say $\KK$ is symmetric.
We need to compare the cost of our approach to HKPV.
Let us then use HKPV to sample $X_1,\dots,X_{M}$ i.i.d. from $\text{DPP}(\KK)$ at cost $\mathcal{O}(N^\omega + MN\tau^2)$, with $\tau=\text{Trace}(\KK)$; see Section~\ref{ss:HKPV}.
We then have $M$ i.i.d. samples $Y_i=\sum_{x\in X_i} \Psi(x)$, leading to the empirical CDF $\hat{F}_M(t) = \frac1M \sum_{i=1}^M 1_{Y_i\leq t}$.
%\begin{equation}
%\hat{F}_M(t) = \frac1M \sum_{i=1}^M 1_{Y_i\leq t}.
%\label{e:ecdf}
%\end{equation}
The Dvoretzky-Kiefer-Wolfowitz inequality (DKW; \cite{Mas90}) further yields a $(1-\delta)$-confidence band of half-width $\sqrt{\log(2/\delta)/2M}$ around $\hat{F}_M$. In comparison, running our algorithm in Figure~\ref{f:algorithm} requires computing one $N\times N$ determinant per loop iteration and per quadrature node in the discretization of the Bromwich integral \eqref{e:bromwich}. Assuming that the number of nodes $E_i=E$ is constant for all $t_i$s for simplicity, this gives a $\mathcal{O}(TE N^\omega+TC)$ time complexity, where $C=\textsc{Poly}(E)$ is the complexity of running \textsc{deHoog}'s quadrature once the integrand in \eqref{e:bromwich} has been evaluated at $E$ nodes.
Furthermore, likely at the cost of some numerical accuracy due to not respecting the rule of thumb of $\emph{mpmath}$, we can also keep $\sigma_t=\sigma$ fixed across all values of $t$ and run \textsc{deHoog}; see Section~\ref{ss:numerical_inversion}. This allows to take Step~\ref{as:evaluation} out of the loop in Figure~\ref{f:algorithm}, taking the complexity down to $\mathcal{O}(E N^\omega + TC)$. We used that reduction in all the experiments of Section~\ref{s:experiments}.

Keeping in mind that $E$ is typically in the tens in practice for \textsc{deHoog}, the cost of our approach is comparable to HKPV whenever the $\mathcal O(N^\omega)$ cost of diagonalizing $\KK$ dominates the cost of HKPV. Thus, without any further structural assumption on $\KK$, our method only improves over HKPV in its wider applicability.
Our experiments in Section~\ref{s:experiments} further suggest that, for a similar cost, the result of \textsc{deHoog} is closer to the actual $F$ than the empirical cdf $\hat{F}_M$.
%\eqref{e:ecdf}.
However, in all rigour, we would need a mathematical statement on the error of \textsc{deHoog}, in order to compare it to the DKW confidence band around $\hat F_M$. We could not locate such a mathematical statement in the numerical analysis literature.

\subsection{Scaling up to large universal sets}

Besides wide applicability, our approach shines in its scalability. First, we inherit low-rank arguments for HKPV. Indeed, whenever a decomposition $\KK = \BB^T \BB$ with $\BB$ a $D\times N$ matrix like \eqref{e:nystrom} can make HKPV more scalable, see Section~\ref{ss:scalingUp}, our method inherits the same scalability.
Indeed, using the \textit{spectrum trick} $\text{Spec}(\PP\QQ)\setminus \{0\} = \text{Spec}(\QQ\PP) \setminus \{0\}$,
%\begin{equation}
%\text{Spec}(\PP\QQ)\setminus \{0\} = \text{Spec}(\QQ\PP) \setminus \{0\},
%\label{e:spectrum_trick}
%\end{equation}
evaluating $\el_Y$ in \eqref{eq:Laplace} boils down to evaluating
\begin{equation}
  \det[\II-\DD_{\Psi}\KK] = \det[\II-\DD_{\Psi}\BB^T\BB] = \det[\II-\BB \DD_\Psi\BB^T].
\label{e:nystromForLaplace}
\end{equation}
Computing \eqref{e:nystromForLaplace} takes $\mathcal{O}(ND^2)$ flops. This compares favourably with the expected cost $\mathcal{O}(MND^2\tau)$ of obtaining $M$ samples through HKPV with the same low-rank approximation.

Second, our algorithm can actually benefit from more widely applicable low-rank decompositions than HKPV, and thus provide a scalable alternative to the default $LU$-based sampler of \cite{Pou20} for generic DPPs.
For instance, even if $\KK$ is not symmetric, we can still  compute its SVD of rank $D$, $\KK \approx \UU \SSigma \VV^H$, with $\SSigma$ a $D\times D$ diagonal matrix with nonnegative entries, and $D\ll N$.
Then, using the same \textit{spectrum trick} as for \eqref{e:nystromForLaplace}, we can write $\el_Y$ in \eqref{eq:Laplace} as
\begin{equation}
  \det[\II-\DD_{\Psi}\KK] \approx \det[\II-\DD_{\Psi}\UU \SSigma \VV^H] = \det[\II-\SSigma^{1/2} \VV^H \DD_\Psi \UU \SSigma^{1/2}],
\label{e:svdForLaplace}
\end{equation}
which evaluates in $\mathcal O(ND^2)$ flops.
The bottleneck is thus the SVD. When $N$ is moderately large, we can use, e.g., the randomized SVD of \cite[Section 5.3]{WLRT08}, which requires $\mathcal{O}(N^2\log D + D^2 N\log N + ND^2)$ flops \cite[Remark 5.6]{WLRT08}. Overall, this provides a speedup over the $\mathcal O (N^3)$ exact $LU$-based sampler of \cite{Pou20}. Bigger speedups can naturally follow from other results on low-rank decompositions.

Third, we note that when evaluating $\el_Y(s)$ in \eqref{eq:Laplace}, it is the rank of $\DD_\Psi \KK$ that is the natural parameter, not the rank of $\KK$. It can well be in applications that the former is much smaller, since the rank of $\DD_\Psi$ is the support of $\Psi$. More generally, $\DD_\Psi$ could well have a lot of diagonal elements close to zero, as in the application of Section~\ref{s:experiments} to recommendation systems: $\Psi(i)$ is the price of item $i$ in a catalog $\Xi$, and the prices in the catalog concentrate towards zero. In Figure~\ref{f:algorithm_variant}, we give a variant of our algorithm that computes a low-rank approximation to $\DD_\Psi \KK$ for each new point $s$ where $\el_Y$ needs to be evaluated. In practice, we use again the approximate SVD of \cite{WLRT08}. On top of leveraging the lower rank of $\DD_\Psi \KK$, the random projection in \cite{WLRT08} now takes $\DD_\Psi$ into account, which intuitively should further improve accuracy compared to, e.g., using the ridge leverage scores of $\KK$ in Nyström \cite{AlMa15}.

\section{Experiments}
\label{s:experiments}
%We first compare our algorithms from Figures~\ref{f:algorithm} and \ref{f:algorithm_variant} to HKPV on a synthetic symmetric kernel, investigating the effect different algorithmic choices. We use the HKPV implementation of \emph{DPPy} \cite{GBPV19}. Then we compare to the $LU$-based sampler \cite[Algorithm 1]{Pou20} on a real-world application in retail that has been used in \cite{GBDK19} to demonstrate the modelling power of non-symmetric DPPs.

\paragraph{A synthetic symmetric kernel.} We take $N=10^3$, and draw a generic kernel of rank $100$ as $\KK = \sum_{i=1}^{100} \lambda_i v_i v_i^T$, where $\lambda_i=1/\sqrt{i}$ is a slowly decaying (deterministic) spectrum, and the $v_i$s are drawn i.i.d. from the Haar measure on $O_N(\mathbb{R})$. Since the kernel rank is only a tenth of $N$, low-rank approximations should intuitively be accurate. We consider an arbitrary linear statistic $\Psi(i)=\vert\cos i\vert$.

In Figure~\ref{f:results_symmetric_abscos}, we show the approximate CDFs obtained at $T=50$ equally spaced points. The blue baseline is the empirical CDF obtained from $10^4$ HKPV samples, using the implementation of \emph{DPPy} \cite{GBPV19}, with the DKW confidence band in shaded blue. In orange, we show the empirical CDF obtained from $100$ HKPV samples of DPP($\BB^T\BB$), where the $D\times N$ matrix $\BB$ is obtained by a Nyström approximation, using $D$ columns sampled without replacement using the approximate ridge leverage scores of \cite{AlMa15}, see Section~\ref{ss:scalingUp}. In the left panel of Figure~\ref{f:results_symmetric_abscos}, we take $D=\text{rk}(\KK) = 100$, while in the right panel we take $D=200$. In dashdotted green, we show the result of our algorithm from Figure~\ref{f:algorithm} applied to the Nyström kernel $\BB^T\BB$, with the same $E=41$ evaluations of $\el_Y$ to estimate $F(t)$ for all $t$s. In other words, the abscissa $\sigma$ of the Bromwich integral \eqref{e:bromwich} is kept fixed, to the smallest value proposed by \emph{mpmath} for the input $t$s. Finally, in dashed purple, we show the variant from Figure~\ref{f:algorithm_variant}, with the same $E=55$ evaluations of $\el_Y$ for all $t$s, thus needing $E$ SVDs in total. Using the \emph{mpmath} default $E=41$ nodes resulted in oscillatory behaviour in the right tail of $Y$. Since the resulting CDF is expected to be non-decreasing, oscillations are necessarily due to approximation error, and we slightly augmented the number of nodes to suppress the oscillations.

%Instead, for each of the $TE$ evaluations of $\el_Y$, we compute the randomized SVD \cite{WLRT08} of rank $D$ of the complex matrix $\DD_\Psi \KK$, where $\DD_\Psi$ is the diagonal matrix defined in \eqref{}.

On the left panel, we observe that $D=\text{rk}(\KK)$ is not enough for Nyström to be a close approximation of the target CDF: for a third of the range, the Nyström confidence band does not intersect the confidence band on the true CDF. The result of our algorithm from Figure~\ref{f:algorithm} is a smoothed version of the empirical Nyström CDF. This makes our green curve more compatible with the profile that we guess in the blue band, but it is still as biased as Nyström. Among approximations of rank $D=\text{rk}(\KK)$, the clear winner is our algorithm from Figure~\ref{f:algorithm_variant}, with the purple and blue curves superimposed.
%A few undesired oscillations appear at the right of the range, though. In practice, they would be easy to detect and correct: a CDF is necessarily increasing, so that we could keep only the values that increase and then interpolate, but we still show the oscillations as they seem to be systematic and suggest investigating their origin. Additionally, we had to let $\sigma_t$ depend on $t$ to avoid serious divergences of the Bromwich quadratures.

While future research on low-rank approximations of complex symmetric (not Hermitian) matrices may lower the cost of the purple curve and make it the default option, we want to further compare Nyström and the cheapest version of our algorithm in green. On the right panel of Figure~\ref{f:results_symmetric_abscos}, we increase the projected rank $D$ to twice the rank of $\KK$, which makes the Nyström confidence band include the blue confidence band. The Nyström empirical CDF remains however a poor approximation to the underlying CDF, compared to the perfect fit of our algorithms in green and purple. Thus, for a similar cost, the green curve is not only smoother than Nyström, but also more accurate.

\paragraph{Low-rank linear statistics.}
As noted in Section~\ref{s:algorithm}, our variant from Figure~\ref{f:algorithm_variant} can take advantage from $\DD_\Psi \KK$ being low-rank, even when $\KK$ is not. To see this, we switch the linear statistic to $\psi(i)=1/i$, so that many terms in $\DD_\Psi$ are close to zero. The resulting figures are very similar to Figure~\ref{f:results_symmetric_abscos}, and we defer them to Appendix~\ref{supp:s:low_rank_experiment}. As expected, Nyström and our algorithm from Figure~\ref{f:algorithm} suffer from lowering $D$ to $\text{rk}(\KK)/2 = 50$, while the purple curve of our variant from Figure~\ref{f:algorithm_variant} stays close to the blue baseline, although not in the blue DKW confidence band.

% \rb{Todo list when we have time:
% \begin{itemize}
% \item make a point about how Nyström and de Hoog vary with random projections.
% \item show influence of precomputing the evaluations of $\el_Y$.
% \item Show an example computation of a quantile?
% \item Show an example of actual sampling?
% \end{itemize}
% }
\begin{figure}
\centering
\subfigure[$ D = \text{rk}(\KK)$]{
\includegraphics[width=\twofigminus]{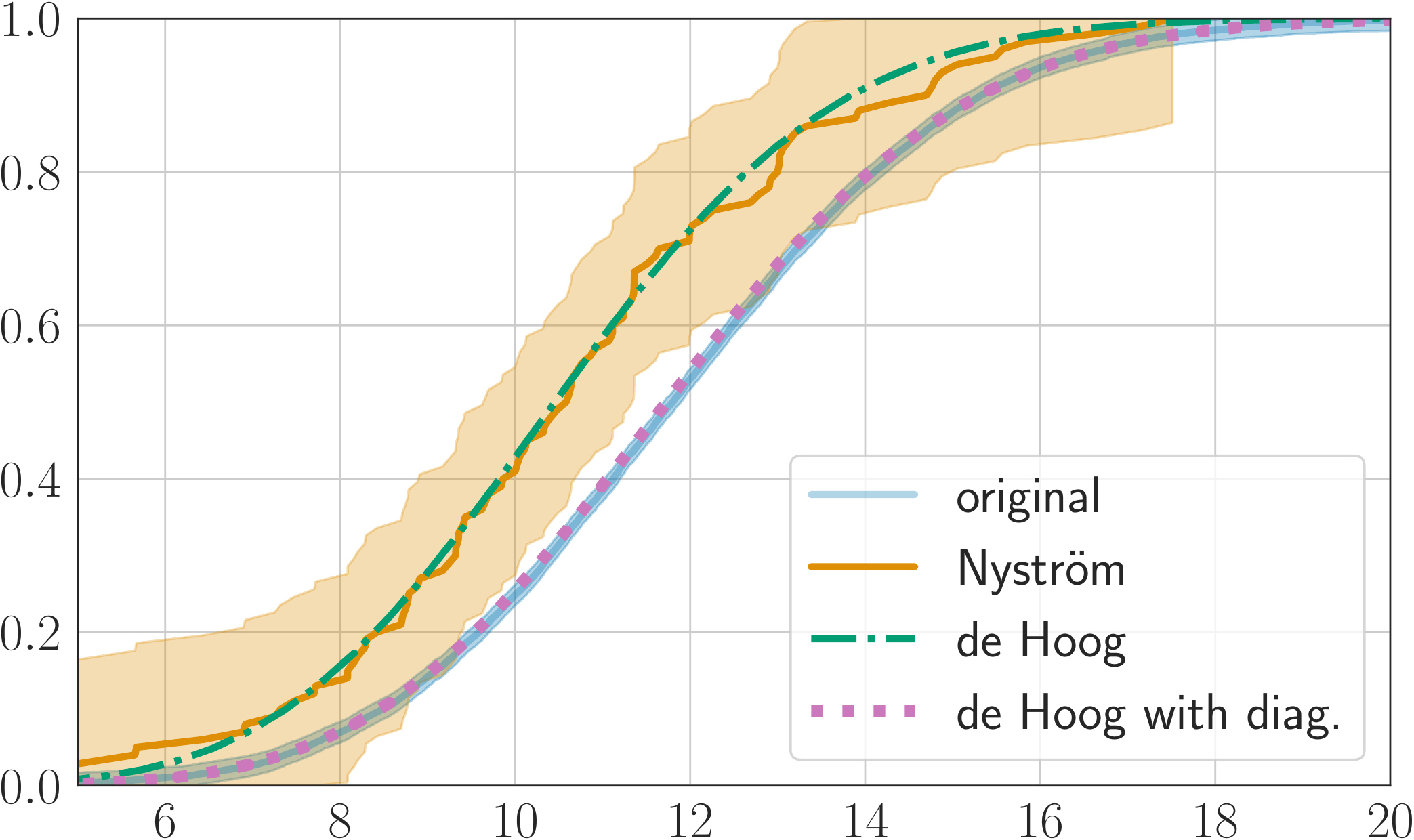}
}
%\qquad
\subfigure[$ D = 2\times \text{rk}(\KK)$]{
\includegraphics[width=\twofigminus]{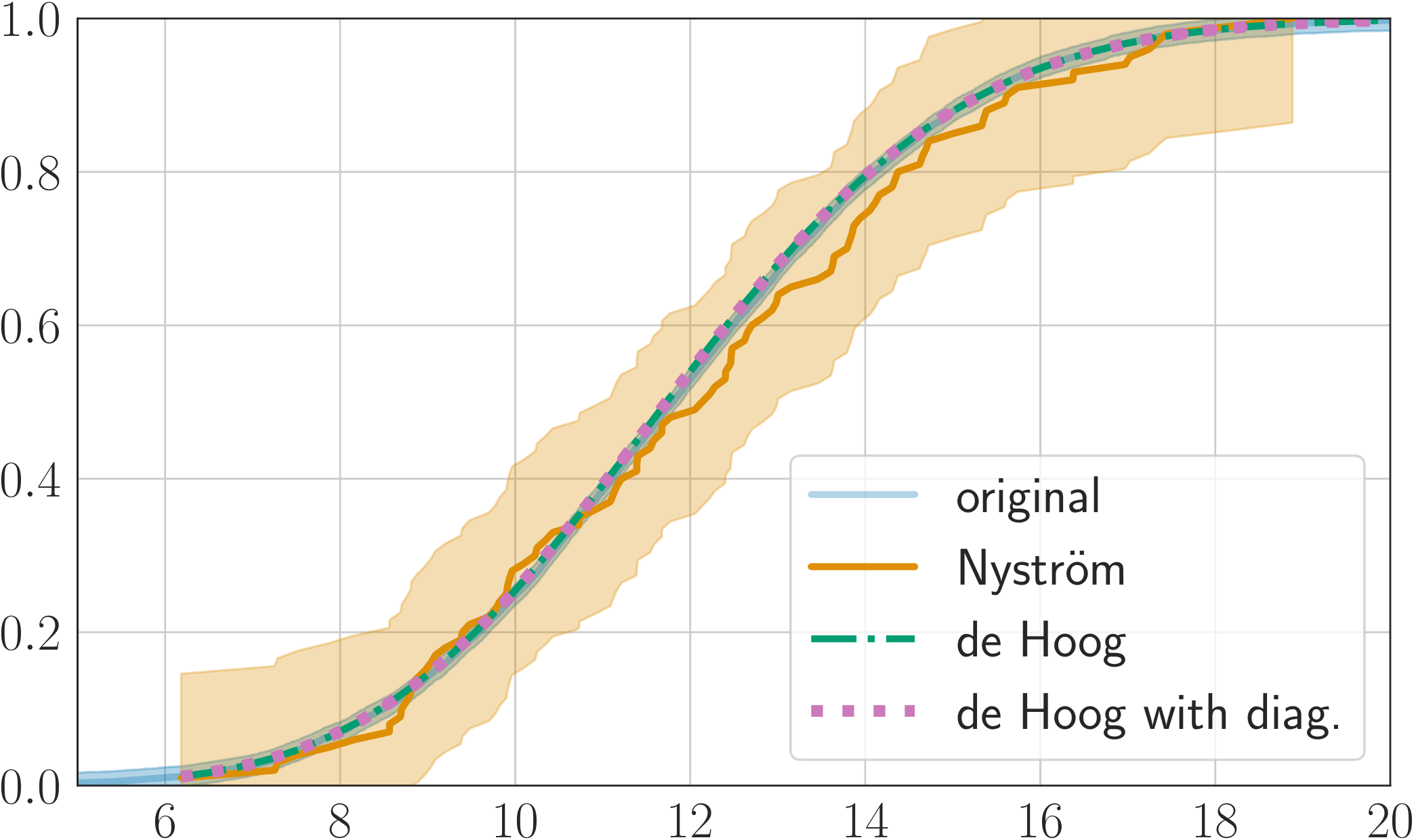}
}
\caption{Results of a synthetic experiment on symmetric kernels, with $N=10^3$ and $\Psi = \vert\cos(\cdot)\vert$.
\label{f:results_symmetric_abscos}
}
\end{figure}

To conclude the synthetic experiments, we always recommend our algorithm in Figure~\ref{f:algorithm} over Nyström to approximate a CDF, and we confirm that the variant in Figure~\ref{f:algorithm_variant} has the potential to further take down the rank of the approximation, although two aspects call for further investigation: the cost of the many SVDs and the best way to avoid oscillatory behaviour of the estimated CDF in the tail. We observed very similar results on synthetic nonsymmetric kernels (not shown).
%, but for brevity we now turn to a real-world application of nonsymmetric kernels.

\paragraph{A non-symmetric kernel for a recommendation system.}
We borrow a setting from \cite{GBDK19}, where the authors learn a nonsymmetric kernel from a large UK retail dataset, consisting in a list of 20\,728 orders --an order is a set of items-- from a catalog of around 4000 items. Samples from the learned DPP can thus be seen as candidate orders, and the DPP is ultimately used in tasks such as recommendations for basket completion. We took all parameters as in \cite{GBDK19} and use the \emph{PyTorch} \href{https://github.com/cgartrel/nonsymmetric-DPP-learning}{code} they provide for preprocessing the dataset and learning $\LL$. We obtain an $N\times N$ $L$-ensemble kernel $\LL$ of rank less than $100$ with $N=3941$ items. In particular, the learned kernel is constrained to have rank less than $100$. We then compute $\KK = (\II+\LL)^{-1}\LL$, see Section~\ref{s:determinantal}. The resulting nonsymmetric $\KK$ encodes both negative and positive correlations, in the sense that $\KK_{i,j}\KK_{j,i}$ can be of any sign; see our Figure~\ref{f:uk_results:excess} and \cite{GBDK19}. However, sampling from DPP($\KK$) is impractical: only the $LU$-based sampler of \cite{Pou20} applies, and our Python implementation takes 70 seconds for a single DPP($\KK$) sample on a modern laptop.

Assuming we are only interested in a linear statistic of the DPP, say the total price of the items in the basket represented by a DPP sample, we can apply the variant of our algorithm in Figure~\ref{f:algorithm_variant}. Using again the same $E=41$ quadrature nodes for all $T=100$ price values, and $D=100$, we obtain the approximate CDF in Figure~\ref{f:uk_results:cdf} in about 60 seconds, less than the time required for a single sample of the $LU$-based sampler. For comparison, we show in blue the empirical CDF obtained from $100$ samples of the $LU$-based sampler, obtained in about 2 hours. The range of the linear statistic is cut to 200, since a handful of very expensive items make the right tail very long, but we observed no oscillatory behaviour this time. Our algorithms thus unlock the exploration of generic DPP models.

\begin{figure}
\centering
\subfigure[$\KK_{ij}\KK_{ji}$ for $1\leq i,j\leq 50$]{
\includegraphics[width=\twofigMINUS]{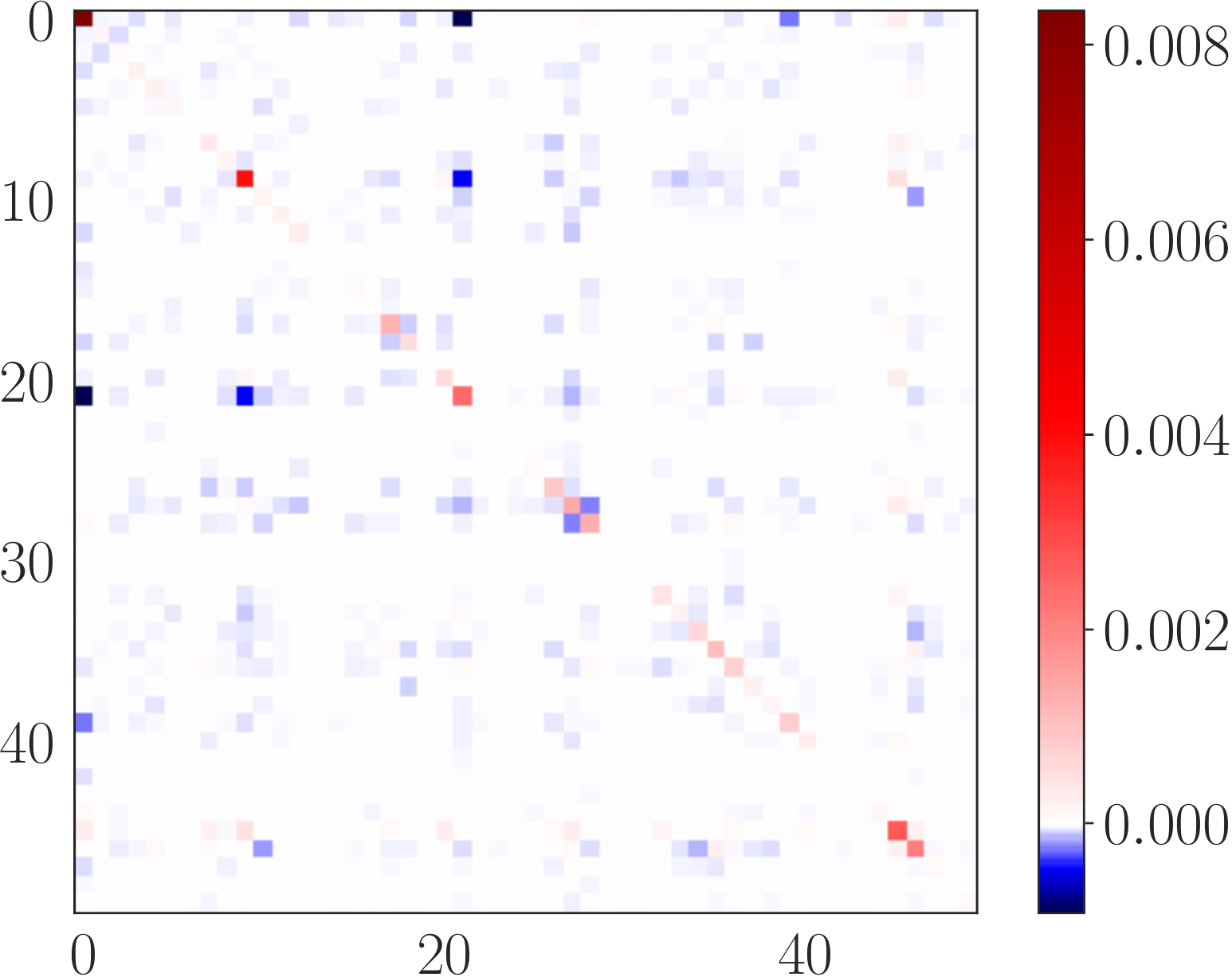}
\label{f:uk_results:excess}
}
\qquad
\subfigure[$D=100$]{
\includegraphics[width=\twofigminus]{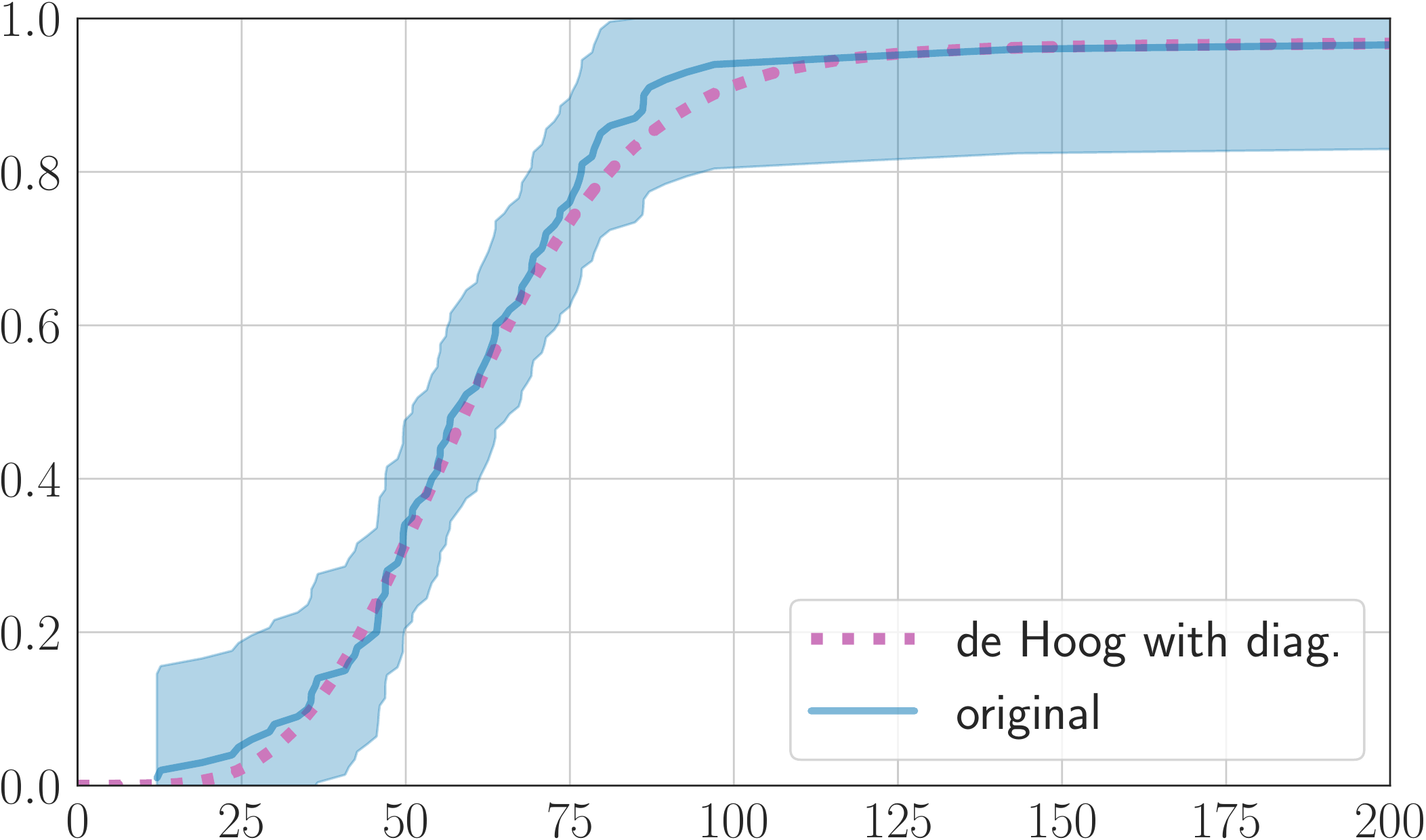}
\label{f:uk_results:cdf}
}
\caption{Results of the UK retail experiment with $N=3941$ and $\Psi(i)$ the price of item $i$.
\label{f:uk_results}
}
\end{figure}

\section{Discussion}
\label{s:discussion}
% We proposed a novel perspective on sampling in the DPP setting, focussing on functional evaluation on DPP samples, which is the objective of many applications. In this vein, we exploited a succinct determinantal formula for the Laplace transforms of linear statistics of DPPs in order to approximate the CDF such statistics \textit{directly}, without having to implement the expensive intermediate step of generating DPP samples. Our methods scales up using a wide array of low-rank approximations, and is applicable beyond symmetric kernels.
In terms of methodological flavour, our approach provides a bridge between  numerical analysis and probabilistic models. Natural avenues for further investigation include understanding the error of numerical inversion of the Laplace transform (e.g., de Hoog's method), for which, to our knowledge, there are no theoretical guarantees in the numerical analysis literature. Another natural direction would be extending our approach to DPPs on the continuum, which have attracted recent interest as spatial statistical models \cite{LaMoRu15}, or as sampling tools for kernel quadrature \cite{BeBaCh19}. A further natural question would be to extend this approach to other probabilistic models that are of interest in ML, beyond the particular setting of DPPs. For starters, conditioning a DPP to contain exactly $k$ points leads to the popular $k$-DPPs \cite{KuTa12}, which are mixtures of DPPs. As such, their Laplace transform is a linear combination of (many) determinants. Truncating the mixture to a tractable number of components with big weights should naturally extend our approach.

\section*{Acknowledgments}
RB acknowledges support from ERC grant \textsc{Blackjack} (ERC-2019-STG-851866). SG acknowledges support from MOE grant R-146-000-250-133.

% Note that the Reference section does not count towards the eight pages of content that are allowed.
\bibliographystyle{unsrt}
\bibliography{Sampling,zonotope,guillaume-thesis}

\begin{thebibliography}{10}

\bibitem{HKPV06}
J.~B. Hough, M.~Krishnapur, Y.~Peres, and B.~Vir\'ag.
\newblock Determinantal processes and independence.
\newblock {\em Probability surveys}, 2006.

\bibitem{Mac}
O.~Macchi.
\newblock The coincidence approach to stochastic point processes.
\newblock {\em Advances in Applied Probability}, 7(1):83--122, 1975.

\bibitem{KuTa12}
A.~Kulesza and B.~Taskar.
\newblock {\em Determinantal Point Processes for Machine Learning}, volume~5 of
  {\em Foundations and Trends in Machine Learning}.
\newblock Now Publishers Inc., 2012.

\bibitem{KuTa10}
A.~Kulesza and B.~Taskar.
\newblock Structured determinantal point processes.
\newblock In {\em Advances in neural information processing systems}, pages
  1171--1179, 2010.

\bibitem{GBDK19}
M.~Gartrell, V.-E. Brunel, E.~Dohmatob, and S.~Krichene.
\newblock Learning nonsymmetric determinantal point processes.
\newblock In {\em Advances in Neural Information Processing Systems}, pages
  6715--6725, 2019.

\bibitem{TrBaAm19}
N.~Tremblay, S.~Barthelm\'e, and P.-O. Amblard.
\newblock Determinantal point processes for coresets.
\newblock {\em Journal of Machine Learning Research}, 20(168):1--70, 2019.

\bibitem{BeBaCh18Sub}
A.~Belhadji, R.~Bardenet, and P.~Chainais.
\newblock A determinantal point process for column subset selection.
\newblock {\em Arxiv preprint:1812.09771}, 2018.

\bibitem{PrWi98JoA}
J.~G. Propp and D.~B. Wilson.
\newblock {How to get a perfectly random sample from a generic Markov chain and
  generate a random spanning tree of a directed graph}.
\newblock {\em Journal of Algorithms}, 27(2):170--217, 1998.

\bibitem{KuTa11}
A.~Kulesza and B.~Taskar.
\newblock $k$-dpps: Fixed-size determinantal point processes.
\newblock {\em International Conference on Machine Learning}, pages 1193--1200,
  2011.

\bibitem{GiKuTa12}
J.~Gillenwater, A.~Kulesza, and B.~Taskar.
\newblock Discovering diverse and salient threads in document collections.
\newblock In {\em Joint Conference on Empirical Methods in Natural Language
  Processing and Computational Natural Language Learning}, pages 710--720,
  2012.

\bibitem{AKFT13}
R.~H. Affandi, A.~Kulesza, E.~B. Fox, and B.~Taskar.
\newblock {Nystr\"om approximation for large-scale determinantal processes.}
\newblock {\em International Conference on Artificial Intelligence and
  Statistics}, 31:85--98, 2013.

\bibitem{AFAT13}
H.~Affandi, A.~Kulesza, E.~B. Fox, and B.~Taskar.
\newblock {Nystr{\"{o}}m Approximation for Large-Scale Determinantal
  Processes}.
\newblock In {\em International Conference on Artificial Intelligence and
  Statistics (AISTATS)}, 2013.

\bibitem{Der19}
M.~Derezi{\'n}ski.
\newblock Fast determinantal point processes via distortion-free intermediate
  sampling.
\newblock In {\em Conference on Learning Theory (COLT)}, 2019.

\bibitem{DeCaVa19}
M.~Derezinski, D.~Calandriello, and M.~Valko.
\newblock Exact sampling of determinantal point processes with sublinear time
  preprocessing.
\newblock In {\em Advances in Neural Information Processing Systems (NeurIPS)},
  pages 11542--11554, 2019.

\bibitem{ZhKjMa17}
C.~Zhang, H.~Kjellstrom, and S.~Mandt.
\newblock Determinantal point processes for mini-batch diversification.
\newblock In {\em Uncertainty in Artificial Intelligence (UAI)}, 2017.

\bibitem{ST}
T.~Shirai and Y.~Takahashi.
\newblock Random point fields associated with certain fredholm determinants i:
  fermion, poisson and boson point processes.
\newblock {\em Journal of Functional Analysis}, 205(2):414--463, 2003.

\bibitem{RoCa04}
C.~P. Robert and G.~Casella.
\newblock {\em Monte Carlo Statistical Methods}.
\newblock Springer-Verlag, New York, 2004.

\bibitem{Sos2000}
A.~Soshnikov.
\newblock Determinantal random point fields.
\newblock {\em Russian Mathematical Surveys}, 55(5):923, 2000.

\bibitem{Boro}
Alexei Borodin.
\newblock Determinantal point processes.
\newblock In {\em The Oxford Handbook of Random Matrix Theory}.

\bibitem{Bru18}
V.-E. Brunel.
\newblock Learning signed determinantal point processes through the principal
  minor assignment problem.
\newblock In {\em Advances in Neural Information Processing Systems (NeurIPS)},
  pages 7365--7374, 2018.

\bibitem{Pou20}
J.~Poulson.
\newblock High-performance sampling of generic determinantal point processes.
\newblock {\em Philosophical Transactions of the Royal Society A},
  378(2166):20190059, 2020.

\bibitem{DeLaGa20}
A.~Desolneux, C.~Launay, and B.~Galerne.
\newblock Exact sampling of determinantal point processes without
  eigendecomposition.
\newblock {\em Journal of Applied Probability}, 2020.

\bibitem{Kan13}
B.~Kang.
\newblock {Fast determinantal point process sampling with application to
  clustering}.
\newblock In {\em Neural Information Processing Systems}, pages 2319--2327,
  2013.

\bibitem{LiJeSr15}
C.~Li, S.~Jegelka, and S.~Sra.
\newblock {Efficient sampling for $k$-determinantal point processes}.
\newblock In {\em Artificial Intelligence and Statistics}, pages 1328--1337,
  2016.

\bibitem{ReKa15}
P.~Rebeschini and A.~Karbasi.
\newblock Fast mixing for discrete point processes.
\newblock In {\em Conference on Learning Theory}, pages 1480--1500, 2015.

\bibitem{AnGhRe16}
N.~Anari, S.~O. Gharan, and A.~Rezaei.
\newblock {Monte Carlo Markov chain algorithms for sampling strongly Rayleigh
  distributions and determinantal point processes}.
\newblock In {\em Conference on Learning Theory}, pages 23--26, 2016.

\bibitem{LiJeSr16}
C.~Li, S.~Jegelka, and S.~Sra.
\newblock Fast sampling for strongly rayleigh measures with application to
  determinantal point processes.
\newblock In {\em Neural Information Processing Systems}, pages 4188--4196,
  2016.

\bibitem{GaBaVa17}
G.~Gautier, R.~Bardenet, and M.~Valko.
\newblock Zonotope hit-and-run for efficient sampling from projection {DPP}s.
\newblock In {\em International Conference on Machine Learning (ICML)}, pages
  1223--1232. JMLR. org, 2017.

\bibitem{HeSa19}
J.~Hermon and J.~Salez.
\newblock {Modified log-Sobolev inequalities for strong-Rayleigh measures}.
\newblock feb 2019.

\bibitem{Gil14}
J.~A. Gillenwater.
\newblock {\em Approximate inference for determinantal point processes}.
\newblock PhD thesis, University of Pennsylvania, 2014.

\bibitem{WiSe01}
C.~K.~I. Williams and M.~Seeger.
\newblock Using the {N}ystr\"{o}m method to speed up kernel machines.
\newblock In {\em Advances in Neural Information Processing Systems (NeurIPS)},
  pages 682--688, 2001.

\bibitem{GiMa16}
A.~Gittens and M.~W. Mahoney.
\newblock Revisiting the {N}ystr{\"o}m method for improved large-scale machine
  learning.
\newblock {\em The Journal of Machine Learning Research}, 17(1):3977--4041,
  2016.

\bibitem{Cal17}
D.~Calandriello.
\newblock {\em Efficient Sequential Learning in Structured and Constrained
  Environments}.
\newblock PhD thesis, Inria Lille -- Univ. Lille, 2017.

\bibitem{AlMa15}
A.~E. Alaoui and M.~W. Mahoney.
\newblock {Fast Randomized Kernel Ridge Regression with Statistical
  Guarantees}.
\newblock In {\em Neural Information Processing Systems (NeurIPS)}, 2015.

\bibitem{BeBaCh19}
A.~Belhadji, R.~Bardenet, and P.~Chainais.
\newblock Kernel quadrature with determinantal point processes.
\newblock In {\em Advances in Neural Information Processing Systems (NeurIPS)},
  2019.

\bibitem{Kal06}
O.~Kallenberg.
\newblock {\em Foundations of modern probability}.
\newblock Springer Science \& Business Media, 2nd edition, 2006.

\bibitem{Doe74}
G.~Doetsch.
\newblock {\em Introduction to the Theory and Application of the {L}aplace
  Transformation}.
\newblock Springer Science \& Business Media, 1st edition, 1974.

\bibitem{Kuh13}
K.~L. Kuhlman.
\newblock Review of inverse {L}aplace transform algorithms for {L}aplace-space
  numerical approaches.
\newblock {\em Numerical Algorithms}, 63(2):339--355, 2013.

\bibitem{HoKnSt82}
F.~R. De~Hoog, J.~H. Knight, and A.~N. Stokes.
\newblock An improved method for numerical inversion of {L}aplace transforms.
\newblock {\em SIAM Journal on Scientific and Statistical Computing},
  3(3):357--366, 1982.

\bibitem{Joh18}
F.~Johansson et~al.
\newblock {\em mpmath: a {P}ython library for arbitrary-precision
  floating-point arithmetic (version 1.1.0)}, December 2018.
\newblock {\tt http://mpmath.org/}.

\bibitem{Rid09}
M.~S. Ridout.
\newblock Generating random numbers from a distribution specified by its
  {L}aplace transform.
\newblock {\em Statistics and Computing}, 19(4):439, 2009.

\bibitem{Dev81}
L.~Devroye.
\newblock On the computer generation of random variables with a given
  characteristic function.
\newblock {\em Computers \& Mathematics with Applications}, 7(6):547--552,
  1981.

\bibitem{Dev86}
L.~Devroye.
\newblock An automatic method for generating random variates with a given
  characteristic function.
\newblock {\em SIAM Journal on Applied Mathematics}, 46(4):698--719, 1986.

\bibitem{Wal17}
S.~G. Walker.
\newblock A {L}aplace transform inversion method for probability distribution
  functions.
\newblock {\em Statistics and Computing}, 27(2):439--448, 2017.

\bibitem{WLRT08}
F.~Woolfe, E.~Liberty, V.~Rokhlin, and M.~Tygert.
\newblock A fast randomized algorithm for the approximation of matrices.
\newblock {\em Applied and Computational Harmonic Analysis}, 25(3):335--366,
  2008.

\bibitem{Mas90}
P.~Massart.
\newblock The tight constant in the {Dvoretzky-Kiefer-Wolfowitz} inequality.
\newblock {\em The annals of Probability}, pages 1269--1283, 1990.

\bibitem{GBPV19}
G.~Gautier, R.~Bardenet, G.~Polito, and M.~Valko.
\newblock {DPP}y: Sampling determinantal point processes with {P}ython.
\newblock {\em Journal of Machine Learning Research; Open Source Software (JMLR
  MLOSS)}, 2019.

\bibitem{LaMoRu15}
F.~Lavancier, J.~M{\o}ller, and E.~Rubak.
\newblock {Determinantal point process models and statistical inference}.
\newblock {\em Journal of the Royal Statistical Society. Series B: Statistical
  Methodology}, 77(4):853--877, 2015.

\end{thebibliography}
\appendix

\section{Proof of Theorem \ref{thm:Laplace}}
\label{supp:s:proof_of_theorem}

\begin{proof}[Proof of Theorem \ref{thm:Laplace}]
We begin by invoking Lemma \ref{lem:Laplace}, which leads to the fact that any DPP kernel $\KK$ can be approximated by DPP kernels $\{\KK_\eps\}_{\eps \downarrow 0}$, possibly along a sequence, such that the corresponding $L$-ensembles exist. This is equivalent to $(\II - \KK_\eps)$ being invertible, so that the matrices $\LL_\eps:=(\II - \KK_\eps)^{-1}\KK_\eps $ are well defined. In the case of a symmetric kernel $\KK$, for example, such an approximation can be obtained simply by thresholding the spectral decomposition of $\KK$ from the above at $(1-\eps)$, so that $\text{Spec}(\KK_\eps) \subset [0,1-\eps]$, and $\KK_\eps \to \KK$ in Frobenius norm as $\eps \downarrow 0$ . However, such arguments are crucially dependent on the symmetry of the kernel. Given the general scope of the present theorem, which aims to establish \eqref{eq:Laplace} as soon as the determinantal formulae \eqref{eq:DPP-def} holds for all the containment probabilities, this approximation requires more delicate consideration, and its existence is established in complete generality in Lemma \ref{lem:Laplace}. Since both the left and right hand sides of \eqref{eq:Laplace} are continuous in the kernel $\KK$, it  suffices therefore to establish \eqref{eq:Laplace} for kernels with well-defined $L$-ensembles: we may then invoke \eqref{eq:Laplace} for the kernels $\KK_\eps$ and subsequently let $\eps \downarrow 0$, possibly along a sequence.

%For non-symmetric kernels $\KK$, the original DPP is usually defined via the corresponding $\LL$ ensemble, so non-symmetric kernels are covered by definition.

In view of the above discussion, for the rest of the proof we confine ourselves to the situation where the DPP kernel $\KK$ corresponds to a well-defined $L$-ensemble of kernel $\LL=(\II-\KK)^{-1}\KK$, which we will exploit as an analytical tool. To this end, we first observe that if the realisation of the DPP $X$ equals a particular subset $A \subseteq \Xi$, the observed value of the linear statistic $\L(\Psi)$ is given by $\sum_{i \in A} \Psi(i)$. On the other hand, the probability of this event is given by $\frac{\det[\LL_A]}{\det[\II+\LL]}$; see \eqref{eq:DPP-L}. Together, these two facts imply that for any $t \in \C$, we have
\begin{align*}
\el_{\L(\Psi)}(t) = \E[\exp(-t \L(\Psi))] = &\sum_{A \subseteq \Xi} \exp(-t\sum_{i \in A}\Psi(i)) \cdot \frac{\det[\LL_A]}{\det[\II+\LL]} \\ = &\sum_{A \subseteq \Xi} \l(\prod_{i \in A}\exp(-t\Psi(i)) \r) \cdot \frac{\det[\LL_A]}{\det[\II+\LL]}.
\end{align*}
However, setting $\D_\Psi$ to be the diagonal matrix $\D_\Psi=\mathrm{Diag}[(\exp(-t\Psi(i)))_{i \in \Xi}]$, we note that $\prod_{i \in A}\exp(-t\Psi(i)) = \det[(\D_\Psi)_A]$, so we may write
\[  \l(\prod_{i \in A}\exp(-t\Psi(i)) \r) \cdot \det[\LL_A] = \det[(\D_\Psi)_A] \cdot \det[\LL_A] =  \det[(\D_\Psi)_A \LL_A]. \] Since $\D_\Psi$ is a diagonal matrix, we additionally have $(\D_\Psi)_A \LL_A = (D_\Psi \LL)_A$.

Combining all of the above, we may deduce that
\[\el_{\L(\Psi)}(t) = \sum_{A \subseteq \Xi}   \frac{\det[(\D_\Psi \LL)_A]}{\det[\II+\LL]} = \frac{ \sum_{A \subseteq \Xi} \det[(\D_\Psi \LL)_A]}{\det[\II+\LL]}.  \]
But, for any $\Xi \times \Xi$ matrix $\mathbf{M}$, we have $  \sum_{A \subseteq \Xi} \det[\mathbf{M}_A]=\det[\II+\mathbf{M}]$. Applying this to $\mathbf{M}=\D_\Psi \LL$, it enables us to further deduce that
\begin{align*}
\el_{\L(\Psi)}(t)  = &\frac{\det[\II + \D_\Psi \LL]}{\det[\II + \LL]} \\
= &\frac{\det[\II +  \LL \D_\Psi]}{\det[\II + \LL]}  \quad \text{(since $\det[\II+\mathbf{A}\mathbf{B}]=\det[\II+\mathbf{B}\mathbf{A}]$)}  \\
= &\det\l[(\II+\LL)^{-1} + (\II+\LL)^{-1}  \LL \D_\Psi \r] \\
= &\det[(\II-\KK)+\KK \D_{\Psi}]  \quad \text{(using $\KK=(\II + \LL)^{-1}\LL$)} \\
= &\det[\II - \KK(\II - \D_\Psi)] \\
= &\det[\II - (\II - \D_\Psi)\KK] \quad \text{(since $\det[\II+\mathbf{A}\mathbf{B}]=\det[\II+\mathbf{B}\mathbf{A}]$)} \\
= &\det[\II - \DD_\Psi\KK]  \quad \text{(using the definition of $\DD_\Psi$ to write $\DD_\Psi=\II - \D_\Psi$)},
\end{align*}
as desired. In the above derivation, we have made use of the fact that $\det[\II+\mathbf{A}\mathbf{B}]=\det[\II+\mathbf{B}\mathbf{A}]$ for any two matrices $\mathbf{A}$ and $\mathbf{B}$ for which the relevant matrix products are well-defined. This follows from the well-known fact that, for any such matrices, we have $\text{Spec}(\mathbf{A}\mathbf{B}) \cup \{0\} = \text{Spec}(\mathbf{B}\mathbf{A}) \cup \{0\}$. This completes the proof.

\end{proof}

We now prove Lemma \ref{lem:Laplace} which is a necessary ingredient for the proof of Theorem \ref{thm:Laplace}.

\section{Proof of Lemma \ref{lem:Laplace}}
\label{supp:s:proof_of_lemma}

It is perhaps worthwhile to briefly discuss the context for the main ideas contained in the development of Lemma \ref{lem:Laplace}. Our main goal is to use the likelihood formulae \eqref{eq:DPP-L}, which are only defined under certain invertibility conditions on the kernel of the DPP, and then take limits. Accordingly, we need to define approximating kernels (so that limits can be taken) which are also meaningful from the DPP perspective (so that \eqref{eq:DPP-L} holds true). In principle, we could take any reasonable approximation of the original  kernel $\KK$ and try to establish that the equations given by \eqref{eq:DPP-L} form a likelihood - i.e., they are non-negative and sum up to 1 as $A$ varies over the subsets of $\Xi$. However, the non-negativity of the right hand side of \eqref{eq:DPP-L} for an approximating kernel $\KK$ can be non-trivial in general, particularly beyond the symmetric situation when the illuminating spectral geometry of non-negative definite matrices is no longer applicable.

This motivates us to take an indirect approach, by first defining the associated inclusion probabilities for the approximating kernel in the form of a random experiment. This ensures that the stochastic constraints on the relevant determinants are satisfied - albeit for the inclusion probabilities \eqref{eq:DPP-def}. But the inclusion probabilities yield the likelihood equations \eqref{eq:DPP-L} via inclusion-exclusion relations and determinant identities, as soon as the approximants satisfy the invertibility conditions which are easy to check.

\begin{proof}
We first observe that, in order for a DPP satisfying \eqref{eq:DPP-def} with kernel $\mathbf{M}$ to be an $L$-ensemble with kernel $\LL$, it is enough that the matrix $(\II-\mathbf{M})$ is invertible. Indeed, this condition would immediately allow us to define the corresponding matrix $\LL(\mathbf{M})=(\II - \mathbf{M})^{-1}\mathbf{M}$. By the inclusion-exclusion principle, the probabilities $\l(\P(Y = A)\r)_{A \subseteq \Xi}$ and $\l(\P(A \subseteq Y)\r)_{A \subseteq \Xi}$ are in an invertible linear relationship with each other. In particular, the deduction of the collection of equations \eqref{eq:DPP-L} from the collection of equations \eqref{eq:DPP-def}, as $A$ varies over the subsets of $\Xi$, involves deterministic algebraic identities involving linear combinations of determinants, that holds in complete generality without any extra assumptions, as soon as the matrix $\LL(\mathbf{M})$ as above is well-defined.

In view of the above discussion, in order to establish the present lemma, we need to devise random subsets $X_\eps$ that are DPPs (in the sense of \eqref{eq:DPP-def}) such that the corresponding kernels $\KK_\eps$ satisfy two conditions : first, the matrix $(\II - \KK_\eps)$ is invertible, and secondly, $\KK_\eps \to \KK$ as matrices in the Frobenius norm as $\eps \downarrow 0$, possibly along a subsequence.

To this end, we will first define $X_\eps$ in terms of a probability measure on subsets of $\Xi$, and show that it is indeed a DPP in the sense of \eqref{eq:DPP-def} for some kernel $\KK_\eps$. For any $\eps>0$, we define the process $X_\eps$ as follows. First, we obtain a realisation of the process $X$, which is a subset of $\Xi$. Then, we retain each element of this subset independently with probability $(1+\eps)^{-1}$. This gives us a random subset $X_\eps$ of $\Xi$. To show that $X_\eps$ as defined is indeed a DPP, we observe that, for any $A \subseteq \Xi$ we have
\begin{align*}
\P(A \subseteq X_\eps) = &\P(\{A \subseteq X \} \cap \{\text{each elements of $A$ is retained}  \}) \\
=  &\P(A \subseteq X ) \cdot  \P(\text{each elements of $A$ is retained}) \\ = &\det[\KK_A] \cdot (1+\eps)^{-|A|} \\ = &\det[((1+\eps)^{-1}\KK)_A].
\end{align*}
Thus, the random subsets $X_\eps$ of $\Xi$ are indeed DPPs in the sense of \eqref{eq:DPP-def} with kernels $\KK_\eps=(1+\eps)^{-1}\KK$.

Clearly, as $\eps \to 0$, the matrices $\KK_\eps$ converge in the Frobenius norm to the matrix $\KK$. This takes care of the approximation property.

For the invertibility property, we notice that for the matrix $(\II - \KK_\eps)=(\II - (1+\eps)^{-1}\KK)$ to be non-invertible, the matrix $\KK$ must have $(1+\eps)$ as an eigenvalue. But the $\Xi \times \Xi$ matrix $\KK$ has at most $|\Xi|$ eigenvalues, which means that apart from the possible exception of a finite number of values of $\eps$, the matrices $(\II - \KK_\eps)$ must be invertible.

This completes the proof.

\end{proof}

\section{An experiment on a low-rank linear statistic}
\label{supp:s:low_rank_experiment}
We give here the results of an experiment described in Section~\ref{s:experiments} of the main paper, using the same synthetic symmetric kernel as in Figure~\ref{f:results_symmetric_abscos}, but with $\Psi(i)=1/i$. The results are shown in Figure~\ref{f:results_symmetric_inverse} for $D=\text{rk}(\KK)=100$ (left) and $D= \text{rk}(\KK)/2=50$ (right).

\begin{figure}
\subfigure[$ D = \text{rk}(\KK)$]{
\includegraphics[width=\twofig]{"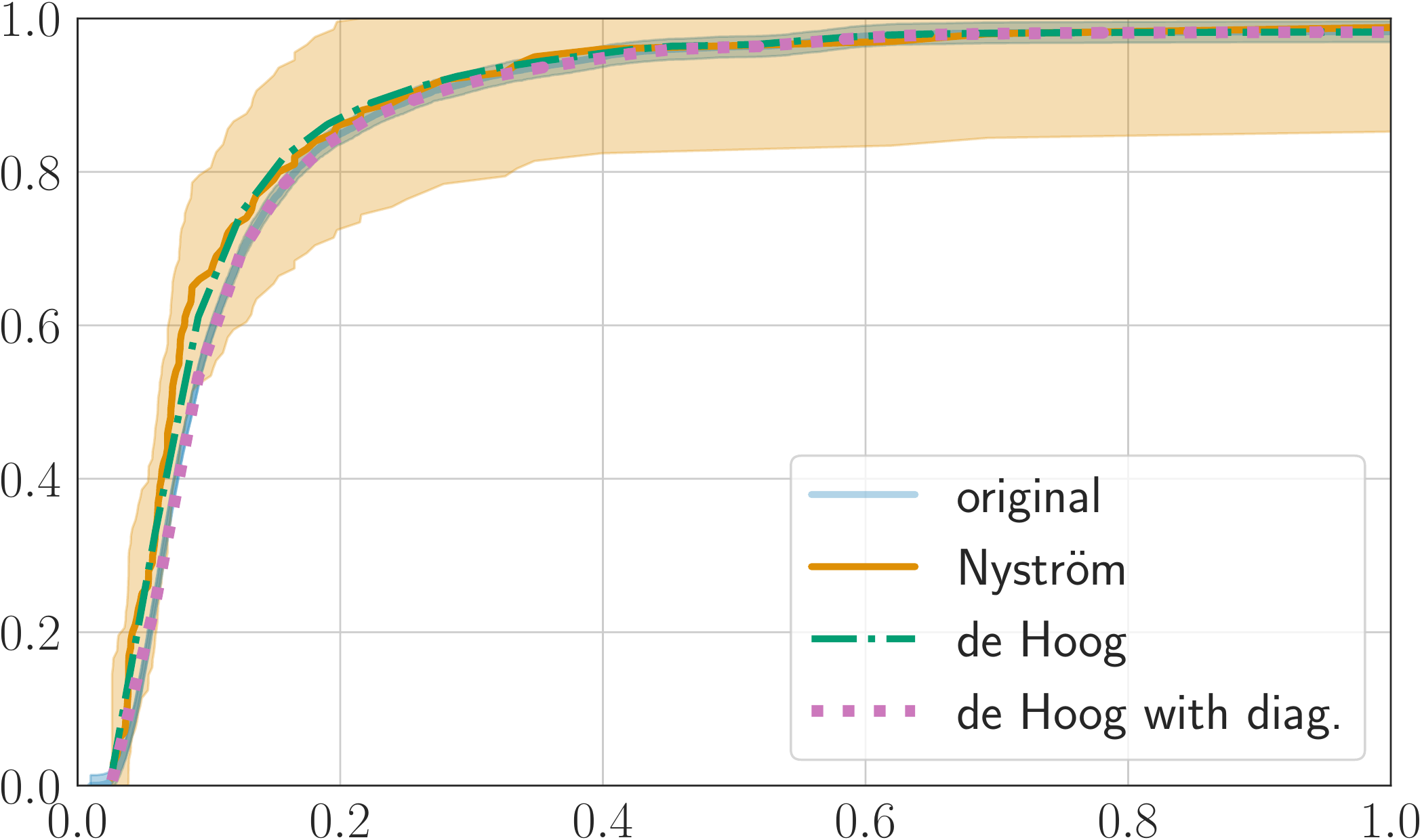}
}
\subfigure[$ D = 0.5 \times \text{rk}(\KK)$]{
\includegraphics[width=\twofig]{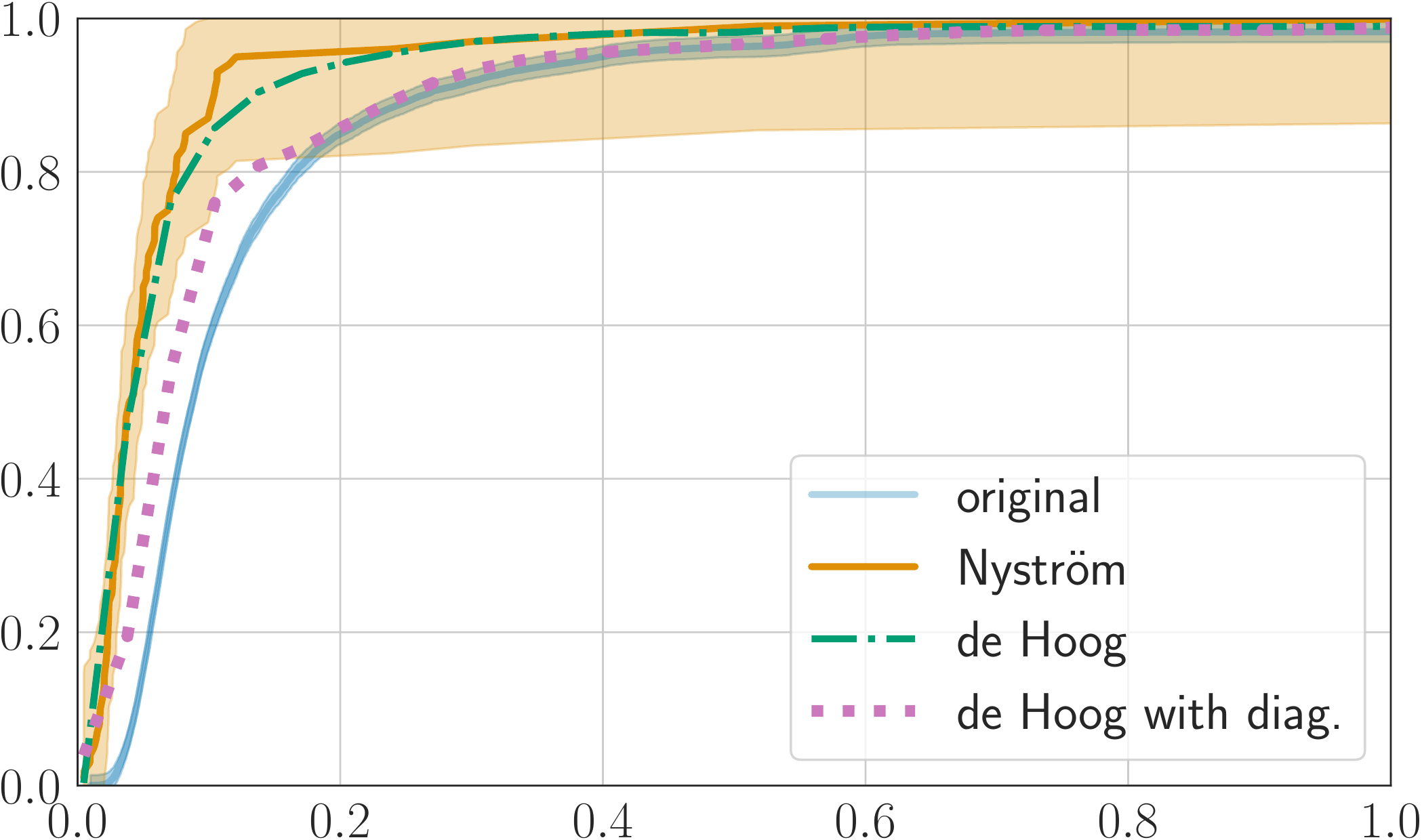}
}
\caption{Results of a synthetic experiment on symmetric kernels, with $N=10^3$ and $g(i) = 1/i$.
\label{f:results_symmetric_inverse}
}
\end{figure}

On the left panel, all approximations are in the same ballpark. Our algorithm from Figure~\ref{f:algorithm} is again a smoother version of the Nyström empirical CDF. And as expected, our variant from Figure~\ref{f:algorithm_variant} is the best fit among all approximations of rank $D$. One might even expect that our variant would fare well with $D$ smaller than the rank of $\KK$. The right panel confirms that: for $D=\text{rk}(\KK)/2$, Nyström loses accuracy, which is partly recovered by our green curve, with $E=41$ nodes and fixed $\sigma$. The purple curve remains closer to the blue baseline than the other two approximations. Surprisingly, we obtained the purple curve by taking $E=21$ nodes and fixed $\sigma$: we actually had to divide the default number of nodes of \emph{mpmath} by 2 to suppress a small oscillation appearing on the purple curve in the right tail of $Y$. Since CDFs are expected to be nondecreasing, an oscillation is necessarily due to approximation error. Increasing the number of nodes by up to $20$ nodes did not suppress the oscillation.

% \newpage
% \section*{Broader impact}
% \label{s:impact}
% This work is fairly theoretical and methodological, and DPPs have been used in a large spectrum of applications, so that it is hard to judge societal impact. In terms of energy savings, our work does significantly improve the complexity of sampling linear statistics of DPPs, even drastically for non-symmetric kernels, which are promising models for recommendation systems; see Section~\ref{s:experiments}.
\end{document}